\def\eqref#1{equation~\ref{#1}}
\def\1{\bm{1}}
\DeclareMathAlphabet{\mathsfit}{\encodingdefault}{\sfdefault}{m}{sl}
\SetMathAlphabet{\mathsfit}{bold}{\encodingdefault}{\sfdefault}{bx}{n}
\newcommand{\E}{\mathbb{E}}
\title{DynFrs: An Efficient Framework for Machine Unlearning in Random Forest}
\author{%
Shurong Wang$^{1}$,
Zhuoyang Shen$^{1}$,
Xinbao Qiao$^{1}$,
Tongning Zhang$^{1}$ \&
Meng Zhang$^{1}$\thanks{Corresponding author.}\\
$^1$ZJUI Institute, Zhejiang University\\
\{\url{shurong.22}, \url{mengzhang}\}\url{@intl.zju.edu.cn}
}
\newcommand{\TableCellC}[1]{\let\temp=\\#1\let\\=\temp}
\newcolumntype{C}[1]{>{\TableCellC\centering}p{#1}}
\newcommand{\TableCellL}[1]{\let\temp=\\#1\let\\=\temp}
\newcolumntype{L}[1]{>{\TableCellL\raggedright}p{#1}}
\newcommand{\TableCellR}[1]{\let\temp=\\#1\let\\=\temp}
\newcolumntype{R}[1]{>{\TableCellR\raggedleft}p{#1}}
\newcommand{\fn}[2][{}]{\textproc{#1}(#2)}
\newcommand{\samp}[2][{}]{\langle \mathbf x_{#2}^{#1}, y_{#2}^{#1} \rangle}
\newcommand{\spl}[2][{}]{(a_{#2}^{#1}, w_{#2}^{#1})}
\newcommand{\tree}[0]{\varphi}
\newcommand{\forest}[0]{\Phi}
\newcommand{\dynfrs}[0]{\textproc{DynFrs}}
\newcommand{\lzy}[0]{\textproc{lzy}}
\newcommand{\occ}[0]{\textproc{occ}}
\newtheorem{theorem}{Theorem}
\newtheorem{lemma}{Lemma}
\definecolor{brightmaroon}{rgb}{0.76, 0.13, 0.28}
\definecolor{mediumelectricblue}{rgb}{0.01, 0.31, 0.59}
\colorlet{mylinkcolor}{brightmaroon}
\colorlet{mycitecolor}{mediumelectricblue}
\colorlet{myurlcolor}{brightmaroon}
\begin{document}

\maketitle

\begin{abstract}
Random Forests are widely recognized for establishing efficacy in classification and regression tasks, standing out in various domains such as medical diagnosis, finance, and personalized recommendations. These domains, however, are inherently sensitive to privacy concerns, as personal and confidential data are involved. With increasing demand for \textit{the right to be forgotten}, particularly under regulations such as GDPR and CCPA, the ability to perform machine unlearning has become crucial for Random Forests. However, insufficient attention was paid to this topic, and existing approaches face difficulties in being applied to real-world scenarios. Addressing this gap, we propose the $\dynfrs$ framework designed to enable efficient machine unlearning in Random Forests while preserving predictive accuracy. $\dynfrs$ leverages subsampling method $\occ(q)$ and a lazy tag strategy $\lzy$, and is still adaptable to any Random Forest variant. In essence, $\occ(q)$ ensures that each sample in the training set occurs only in a proportion of trees so that the impact of deleting samples is limited, and $\lzy$ delays the reconstruction of a subtree until demanded, thereby avoiding unnecessary modifications on tree structures. In experiments, applying $\dynfrs$ on Extremely Randomized Trees yields substantial improvements, achieving orders of magnitude faster unlearning performance and better predictive accuracy than existing machine unlearning methods for tree-based models. Our code is available \href{https://github.com/shurongwang/DynFrs}{here}.
\end{abstract}

\section{Introduction}

Machine unlearning is an emerging paradigm of removing specific training samples from a trained model as if they had never been included in the training set \citep{MunL}.
This concept emerged as a response to growing concerns over personal data security, especially in light of regulations such as the General Data Protection Regulation (GDPR) and the California Consumer Privacy Act (CCPA).
These legislations demand data holders to erase all traces of private users' data upon request, safeguarding \textit{the right to be forgotten}.
However, unlearning a single data point in most machine learning models is more complicated than deleting it from the database because the influence of any training samples is embedded across countless parameters and decision boundaries within the model.
Retraining the model from scratch on the reduced dataset can achieve the desired objective, but is computationally expensive, making it impractical for real-world applications.
Thus, the ability of models to efficiently ``unlearn'' training samples has become increasingly crucial for ensuring compliance with privacy regulations while maintaining predictive accuracy.

Over the past few years, several approaches to machine unlearning have been proposed, particularly focusing on models such as neural networks \citep{L-CODEC, GNN_MunL}, support vector machines \citep{SVM_MunL}, and $k$-nearest neighbors \citep{kNN_MunL}.
However, despite the progress made in these areas, machine unlearning in Random Forests (RFs) has received insufficient attention.
Random Forests, due to their ensemble nature and the unique tree structure, present unique challenges for unlearning that cannot be addressed by techniques developed for neural networks \citep{SISA} and methods dealing with loss functions and gradient \citep{ApproxUnlearn} which RFs lack.
This gap is significant, given that RFs are widely used in critical, privacy-sensitive fields such as medical record analysis \citep{RF-Med}, financial market prediction \citep{RF-Market}, and recommendation systems \citep{RF-RecoSys} for its effectiveness in classification and regression.

To this end, we study an efficient machine unlearning framework dubbed $\dynfrs$ for RFs.
One of its components, the $\occ(q)$ subsampling technique, limits the impact of each data sample to a small portion of trees while maintaining similar or better predictive accuracy through ensemble.
$\dynfrs$ resolves three kinds of requests on RFs: to predict the result of the query, to remove samples (machine unlearning), and to add samples (continual learning ) to the model.
The highly interpretable data structure of Decision Trees allows us to make the following two key observations on optimizing online (require instant response) RF (un)learning.
(1) Requests that logically modify the tree structure (e.g., sample addition and removal) can be partitioned, coalesced, and lazily applied up to a later querying request.
(2) Although fully applying a modifying request on a tree might have to retrain an entire subtree, a querying request after those modifications can only observe the updates on a single tree path in the said subtree.
Therefore, we can amortize the full update cost on a subtree into multiple later queries that observe the relevant portion (see Fig.~\ref{fig:example}).

To this effect, we propose the lazy tag mechanism $\lzy$ for fine-grain tracking of pending updates on tree nodes to implement those optimizations, which provides a low latency online (un)learning RF interface that automatically and implicitly finds the optimal internal batching strategy within nodes.

We summarize the key contributions of this work in the following:
\begin{itemize}
\item \textbf{Subsampling}: We propose a subsampling method $\occ(q)$ that guarantees a $1/q$ times (where $q < 1$) training speedup and an expected $1/q^2$ times unlearning speedup compared to naïve retraining approach. Empirical results show that $\occ(q)$ brings improvements to predictive performance for many datasets.
\item \textbf{Lazy Tag}: We introduce the lazy tag strategy $\lzy$ that avoid subtree retraining for unlearning in RFs. The lazy tag interacts with modification and querying requests to obtain the best internal batching strategy for each tree node when handling online real-time requests.
\item \textbf{Experimental Evaluation}: $\dynfrs$ yields a $4000$ to $1500000$ times speedup relative to the naïve retraining approach and is orders of magnitude faster than existing methods in sequential unlearning and multiple times faster in batch unlearning. In the online mixed data stream settings, $\dynfrs$ achieves an averaged $0.12$ ms latency for modification requests and $1.3$ ms latency for querying requests on a large-scale dataset.
\end{itemize}

\section{Related Works}

Machine unlearning concerns the complicated task of removing specific training sample from a well-trained model \citep{MunL}. Retraining the model from scratch ensures the complete removal of the sample's impact, but it is computationally expensive and impractical, especially when the removal request occurs frequently.
Studies have explored unlearning methods for support vector machines \citep{SVM_MunL}, and $k$-nearest neighbor \citep{kNN_MunL}.
Lately, SISA \citep{SISA} has emerged as a universal unlearning approach for neural networks. SISA partitions the training set into multiple subsets and trains a model for each subset (sharding), and the prediction comes from the aggregated result from all models.
Then, the unlearning is accomplished by retraining the model on the subset containing the requested sample, and a slicing technique is applied in each shard for further improvements.
However, as stated in the paper, SISA meets difficulties when applying slicing to tree-based models.

\cite{HedgeCut} introduced the first unlearning model for RFs based on Extremely Randomized Trees (ERTs), and used a robustness quantification factor to search for robust splits, with which the structure of the tree node will not change under a fixed amount of unlearning requests, while for non-robust splits, a subtree variant is maintained for switching during the unlearning process.
However, HedgeCut only supports removal of a small fraction ($0.1\%$) of the entire dataset.
\cite{DaRE} introduced DaRE, an RF variant similar to ERTs, using random splits and caching to enhance unlearning efficiency. Random splits in the upper tree layers help preserve the structure, though they would decrease predictive accuracy.
DaRE further caches the split statistics, resulting in less subtree retraining. Although DaRE and HedgeCut provide a certain speedup for unlearning, they are incapable of batch unlearning (unlearn multiple samples simultaneously in one request).

Laterly, unlearning frameworks like OnlineBoosting \citep{OnlineBoosting} and DeltaBoosting \citep{DeltaBoosting} are proposed, specifically designed for GBDTs, which differ significantly from RFs in training mechanisms.
OnlineBoosting adjusts trees by using incremental calculation updates in split gains and derivatives, offering faster batch unlearning than DaRE and HedgeCut.
However, it remains an approximate method that cannot fully eliminate the influence of deleted data, and its high computational cost for unlearning individual instances limits its practical use in real-world applications. In the literature, \cite{FakeLazy} attempted to lazily unlearn samples from RFs, but their approach still requires subtree retraining and suffers from several limitations in both clarity and design. Different from others, our proposed $\dynfrs$ excels in both sequential and batch unlearning settings and supports learning new samples after training.

\section{Background}

Our proposed framework is designed for classification and regression tasks. Let $\mathcal D \subseteq \mathbb R^d \times \mathcal Y$ represent the underlying sample space. The goal is to find a hypothesis $h$ that captures certain properties of the unknown $\mathcal D$ based on observed samples. We denote each sample by $\samp{}$, where $\mathbf x \in \mathbb R^d$ is a $d$-dimensional vector describing features of the sample and $y \in \mathcal Y$ represents the corresponding label or value. Denote $S$ as the set of observed samples, consisting of $n$ independent and identically distributed (i.i.d.) samples $\samp{i}$ for $i \in [n]$ drawn from $\mathcal D$, where $[n] \triangleq \{ i \in \mathbb Z \mid 1 \le i \le n \}$. For clarity, we call the $k$-th entry of $\mathbf x$ attribute $k$.

\subsection{Exact Machine Unlearning} \label{sec:exact}

The objective of machine unlearning for a specific learning algorithm $A$ is to efficiently forget certain samples. An additional constraint is that the unlearning algorithm must be equivalent to applying $A$ to the original dataset excluding the sample to be removed.

Formally, let the algorithm $A : \mathcal S \to \mathcal H$ maps a training set $S \in \mathcal S$ to a hypothesis $A(S) \in \mathcal H$. We then define $A^- : \mathcal H \times \mathcal S \times \mathcal D \to \mathcal H$ as an unlearning algorithm, where $A^-(A(S), S, \samp{})$ produces the modified hypothesis with the impact of $\samp{}$ removed. The algorithm $A^-$ is termed an \textit{exact} unlearning algorithm if the hypotheses $A(S \backslash \{\samp{}\})$ and $A^-(A(S), S, \samp{})$ follow the same distribution. That is, for arbitrary hypothesis $h \in \mathcal H$, we have the following:
\begin{equation} \label{eqn:exact}
\Pr\left[ A(S \,\backslash \{\samp{}\}) = h \right]
=
\Pr\left[ {A^-(A(S), S, \samp{})} = h \right].
\end{equation}

\subsection{Random Forest}

Prior to discussing Random Forests, it is essential to first introduce its base learner, the \textit{Decision Tree (DT)}, a well-known tree-structured supervised learning model.
It is proven that finding the optimal DT is NP-Hard \citep{DT-NP, MurTree}; thus, studying hierarchical approaches is prevalent in the literature.
In essence, the tree originates from a root containing all training samples and grows recursively by splitting a leaf node into new leaves until a predefined stopping criterion is met.
Typically, DTs take the form of binary trees where 
each node branches into two by splitting $S_u$ (the set of samples obsessed by the node $u$) into two disjoint sets. Let $u_l$ and $u_r$ be the left and right child of node $u$, and we say split $\spl{}$ partitions $S_u$ into $S_{u_l}$ and $S_{u_r}$ if
\begin{equation*}
S_{u_l} = \{ \samp{} \in S_u \mid \mathbf x_a \le w \}, \qquad S_{u_r} = \{ \samp{} \in S_u \mid \mathbf x_a > w \}.
\end{equation*}
The best split $\spl[\star]{u} \triangleq \arg\min \{ I(S_u, (a, w)) \mid a \in [p], w \in \mathbb R \}$ is found among all possible splits by optimizing an empirical criterion score $I(S, \samp{})$ such as the Gini index $I_G$ \citep{CART}, or the Shannon entropy $I_E$ \citep{C4.5}:
\begin{equation*} \begin{gathered}
I_G(S_u, \spl{}) = \sum_{v \in \{u_l, u_r\}} \frac{|S_v|}{|S_u|} \left( 1 - \sum_{c \in \mathcal Y} \frac{|S_{v, c}|^2}{|S_v|^2} \right), \\
I_E(S_u, \spl{}) = \sum_{\substack{v \in \{u_l, u_r\}, c \in \mathcal Y}} \left( - \frac{|S_{v, c}|}{|S_v|} \log_2 \frac{|S_{v, c}|}{|S_v|} \right),
\end{gathered} \end{equation*}
where $S_{u, c} \triangleq \{ \samp{} \in S_u \mid y = c \}$.
The prediction for sample $\samp{}$ starts with the root and recursively goes down to a child until a leaf is reached, and the traversal proceeds to the left child if $\mathbf x_{a^\star_u} \le w^\star_u$ and to the right otherwise.

\textit{Random Forest (RF)} is an ensemble of independent DTs, where each tree is constructed with an element of randomness to enhance predictive performance. This randomness reduces the variance of the forest's predictions and thus lowers prediction error \citep{RF}. One method involves selecting the best split $\spl[\star]{u}$ among $p$ randomly selected attributes rather than all $d$ attributes.
Additionally, subsampling methods such as bootstrap \citep{CART}, or $m$-out-of-$n$ bootstrap \citep{m-out-of-n_Bootstrap}, are used to introduce more randomness.
Bootstrap creates a training set $S^{(t)}$ for each tree $\tree_t$ by drawing $n$ i.i.d. samples from the original dataset $S$ with replacement. A variant called $m$-out-of-$n$ bootstrap randomly picks $m$ different samples from $S$ to form $S^{(t)}$.
These subsampling methods increase the diversity among trees, enhancing the robustness and generalizability of the model.
However, all existing RF unlearning methods do not adopt subsampling, and \cite{DaRE} claims this exclusion does not affect the model's predictive accuracy.

\subsection{Extremely Randomized Tree}

\textit{Extremely Randomized Tree (ERT)} \citep{ERT} is a variant of the Decision Trees, but ERTs embrace more randomness when finding the best split.
For a tree node $u$, both ERT and DT find the best splits on $p$ randomly selected attributes $a_{1 \cdots p} \subseteq [d]$, but for each attribute $a_k$ ($k \in [p]$), ERT considers only $s$ candidates $\{ (a_k, w_{k, i}) \mid i \in [s] \}$, where $w_{k, 1 \cdots s}$ are uniformly sampled from range $[\min\{ \mathbf x_{i, a_k} \mid \samp{i} \in S_u \}, \max\{ \mathbf x_{i, a_k} \mid \samp{i} \in S_u \}]$.
Then, the best split $\spl[\star]{u}$ is set as the candidates with optimal empirical criterion score (where $I$ could be either $I_G$ or $I_E$):
\begin{equation*}
\spl[\star]{u} \triangleq \arg\min \{ I(S_u, (a_k, w_{k, i})) \mid k \in [p], i \in [s] \}.
\end{equation*}


Compared to DTs considering all $\mathcal O(p|S_u|)$ possible splits, ERTs consider only $\mathcal O(ps)$ candidates while maintaining similar predictive accuracy.
This shrink in candidate size makes ERTs less sensitive to sample removal, and thus makes ERTs outstanding for efficient machine unlearning.

\section{Methods}

In this section, we introduce the $\dynfrs$ framework, which is structured into three components --- the subsampling method $\occ(q)$, the lazy tag strategy $\lzy$, and the base learner ERT.
$\occ(q)$ allocates fewer training samples to each tree (i.e., $S^{(1)}, \cdots, S^{(T)}$) with the aim to minimize the work brought to each tree during both training and unlearning phrase while preserving par predictive accuracy.
The lazy tag strategy $\lzy$ takes advantage of tree structure by caching and batching reconstruction needs within nodes and avoiding redundant work, and thus enables an efficient automatic tree structure modification and suiting $\dynfrs$ for online fixed data streams.
ERT require fewer adjustments towards sample addition/removal, making it the appropriate base learner for the framework.
In a nutshell, $\dynfrs$ optimize machine unlearning in tree-based methods from three perspectives --- across trees ($\occ(q)$), across requests ($\lzy$), and within trees (ERT).

\subsection{Training Samples Subsampling} \label{sec:occ}







Introducing divergence among DTs in the forest is crucial for enhancing predictive performance, as proven by \cite{RF}.
Developing novel subsampling methods can further enhance this effect.
Recall that $S$ represents the training set, and $S^{(t)}$ denotes the training set for the $t$-th tree $\tree_t$.
Empirical results (Section \ref{sec:acc}) indicate that having a reduced training set (i.e., $|S^{(t)}| < |S|$) does not degrade predictive performance and may even improve accuracy as more randomness is involved.
In the following, we demonstrate that $\occ(q)$ leverages smaller $|S^{(t)}|$ and considerably shortens training and unlearning time. 

The key idea is that if sample $\samp{i} \in S$ does not occurs in tree $\tree_t$ (i.e., $\samp{i} \notin S^{(t)}$), then tree $\tree_t$ is unaffected when unlearning $\samp{i}$.
Therefore, it is natural to constrain the occurrence of each sample $\samp{i} \in S$ in the forest.
To achieve this, $\occ(q)$ performs subsampling on trees instead of training samples.
The algorithm starts with iterating through all training samples, and for each sample $\samp{i} \in S$, $k$ different trees (say they are $\tree_{i_1}, \ldots, \tree_{i_k}$ with $i_{1 \cdots k} \subseteq [n]$) are randomly and independently drawn from all $T$ trees (Algorithm \ref{alg:occ}: line 5), where $k$ is determined by the proportion factor $q$ ($q < 1$) satisfying $k = \lceil qT \rceil$.
Then, $\occ(q)$ appends $\samp{i}$ to all of the drawn tree's own training set ($S^{(i_1)}, S^{(i_2)}, \ldots, S^{(i_k)}$) (Algorithm \ref{alg:occ}: line 6-8).
When all sample allocations finish, The algorithm terminates in $\mathcal O(nk) = \mathcal O(nqT)$.
Intuitively, $\occ(q)$ ensures that each sample occurs in exactly $k$ trees in the forest, and thus reduces the impact of each sample from all $T$ trees to merely $\lceil qT \rceil$ trees.

Further calculation confirms that $\occ(q)$ provides an expected $1/q^2$ unlearning speedup toward naïvely retraining.
When unlearning an arbitrary sample with $\occ(q)$, without loss of generality (tree order does not matter as they are independent), assume it occurs in the first $k = \lceil qT \rceil$ trees $\tree_1, \ldots, \tree_k$.
Calculation begins with finding the sum of all affected tree's training set sizes:
\begin{align*}
	N_{\text{occ}} = \sum_{t=1}^k \big| S^{(t)} \big| = \sum_{i=1}^{|S|} \sum_{t=1}^{k} 1 \big[ \samp{i} \in S^{(t)} \big].
\end{align*}
As for each sample $\samp{i} \in S$, $\occ(q)$ assures that $\Pr[\samp{i} \in S^{(t)}] = k / T = q$, giving,
\begin{align*}
	\E\left[ N_{\text{occ}} \right] = \sum_{i=1}^{|S|} \sum_{t=1}^{k} \E\left[ 1 \big[ \samp{i} \in S^{(t)} \big] \right] = \sum_{i=1}^{|S|} \sum_{t=1}^{k} \Pr\left[ \samp{i} \in S^{(t)} \right] = qk|S| = q^2T|S|.
\end{align*}
For the naïve retraining method, the corresponding sum of affected sample size is $T|S|$ ($N_{\text{nai}} = T|S|$) as all trees are affected.
As the retraining time complexity is linear to $N_{\text{occ}}$ and $N_{\text{nai}}$ (Appendix \ref{apdx:thm}, Theorem \ref{thm:train}), the expected computational speedup provided by $\occ(q)$ is $\E[N_{\text{nai}} / N_{\text{occ}}] = 1 / q^2$, when $\occ(q)$ and the naïve method adopt the same retraining method.

The above analysis also concludes that training a Random Forest with $\occ(q)$ will result in a $1/q$ times boost.
In practice, we will empirically show (in Section \ref{sec:acc}) that by taking $q = 0.1$ or $q = 0.2$, the resulting model would have a similar or even higher accuracy in binary classification, which benefits unlearning with a $100\times$ or $25\times$ boost, and make training $10\times$ or $5\times$ faster.

\subsection{Lazy Tag Strategy} \label{sec:lzy}

\begin{figure}[t]
	\centering
	\includegraphics[width = \columnwidth]{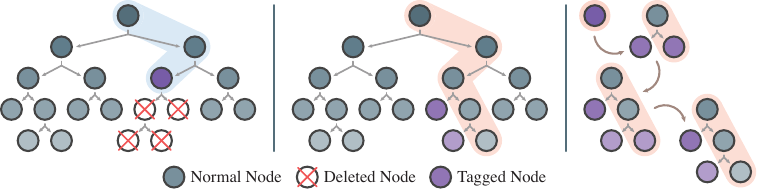}
		\vspace{-1\baselineskip}
	\caption{Left: (a) A sample addition/removal request arises. (b) Nodes it impacts are covered in the blue path. (c) There is a change in best split in the purple node (but not in other visited nodes), so a tag is placed on it. (d) The subtree of the tagged node is deleted. Middle: (a) A querying request arises. (b) Nodes that determine the prediction are covered in the orange path. (c) The tag is pushed down recursively until the query reaches a leaf. Right: A detailed process of how the querying request grows (split tagged node, push down its tag to its children recursively) the tree.}
	\label{fig:example}
\end{figure}

There are two observations on tree-based methods that make $\lzy$ possible.
(1) During the unlearning phase, the portion of the model that is affected by the deleted sample is known (Fig.~\ref{fig:example} left, blue path, and deleted nodes).
(2) During the inference phase, only a small portion (a tree path) of the whole model determines the prediction of the model (Fig.~\ref{fig:example} middle, orange path) \citep{RF_Intro}.
Along with another universal observation (3) Adjustments to the model is unnecessary until a querying request arises, we are able to develop the $\lzy$ lazy tag strategy that minimizes adjustments to tree structure when facing a mixture of sample addition/removal and querying requests.
In contrast, neural network based methods do not fully possess the properties mentioned in observation (1) and (2), although encouraging exploration on similar properties has been reported \citep{add-1, add-2, add-3}.

Intuitively, A tree node $u$ can remain unchanged if adding/removing a sample does not change its best split $\spl[\star]{u}$.
Then, the sample addition/removal request would affect only one of the $u$'s children since the best split partitions $u$ into two disjoint parts.
Following this process recursively, the request will go deeper in the tree until a leaf is reached or a change in best split occurs.
Unfortunately, if the change in best split occurs in node $u$, we need to retrain the whole subtree rooted by $u$ to meet the criterion for exact unlearning.
Therefore, a sample addition/removal request affects only a path and a subtree of the whole tree, which is observation (1).

As retraining a subtree is time-consuming, we place a tag on $u$, denoting it needs a reconstruction (Algorithm \ref{alg:lzy}: line: 8-9).
When another sample addition/removal request reaches the tagged $u$ later, it just simply ends there (Algorithm \ref{alg:lzy}: line 7) since the will-be-happening reconstruction would cover this request as retraining is the most effective unlearning method.
But when a querying request meets a tag, we need to lead it to a leaf for an accurate prediction (observation (3)).
As we do not retrain the subtree, and recall that observation (2) states that the query only observes a path in the tree, so the minimum effect to fulfill this query is to reconstruct the tree path that connects $u$ and a leaf, instead of the entire subtree $u$.
To make this happen, we find the best split of $u$ and grow the left and right child.
Then, we clear the tag on $u$ since it has been split and push down the tags to both of its children, indicating further splitting on them is needed (Algorithm \ref{alg:lzy}: line 26-27).
As depicted in Fig.~\ref{fig:example} right side, the desired path reveals when the recursive pushing-down process reaches a leaf.

To summarize, only queries activate node reconstruction within a node, and $\lzy$ automatically batches all node reconstructions between two visiting queries, saving plenty of computational efforts.
From the tree's perspective, $\lzy$ replaces the subtree retraining by amortizing it into path constructions in queries so that the latency for responding to modification requests is reduced.
Unlike $\occ(q)$, which takes advantage of ensemble and brings less workload to each tree, $\lzy$ relies on tree structures, dismantling requests into smaller parts and handling these parts in batches for smaller workload.

\subsection{Unlearning in Extremely Randomized Trees}

Despite $\occ(q)$ and $\lzy$ making no assumption on the forest's base learner, we opt for Extremely Randomized Trees (ERTs) with the aim of achieving the best performance in machine unlearning.
Different from Decision Trees, ERTs are more robust to changes in training samples while remaining competitive in predictive performance.
This robustness ensures the whole $\dynfrs$ framework undergoes fewer changes in tree structures when unlearning.

In essence, each ERT node finds the best split among $s$ (usually around 5 to 20) candidates on one attribute instead of all possible splits (possibly more than $10^5$) so that the best split has a higher chance to remain unchanged when a sample addition/removal occurs in that node.
Additionally, It takes a time complexity of $\mathcal O(ps)$ for ERTs to detect whether the change in best split occurs if all candidate's split statistics are stored during the training phase, which is much more efficient than the $\mathcal O(p|S_u|)$ detection for Decision Trees.
To be specific, for each ERT node $u$, we store a subset of attributes $a_{1 \cdots p} \subseteq [d]$, and for each interested attribute $a_k$ ($k \in [p]$), $s$ different thresholds $w_{k, 1 \cdots s}$ are randomly generated and stored.
Therefore, a total of $p \cdot s$ candidates $\{ (a_k, w_{k, i}) \mid k \in [p], i \in [s] \}$ determine the best split of the node.
Furthermore, the split statistics of each candidate $(a_k, w_{k, i})$ are also kept, which consists of its empirical criterion score, the number of samples less than the threshold, and the number of positive samples less than the threshold.
When a sample addition/removal occurs, each candidate's split statistics can be updated in $\mathcal O(1)$, and we assign the one with optimal empirical criterion score as the node's best split.


However, one special case is that when a change in range of $a_k$ occurs, a resampling on $w_{k, 1 \cdots s}$ is needed. ERT node $u$ find the candidates of attribute $a_k$ by generating i.i.d. samples following a uniform distribution on $[a_{k, \min}, a_{k, \max}]$, where $a_{k, \min} \triangleq \min\{ \mathbf x_{i, a_k} \mid \samp{i} \in S_u \}$ and $a_{k, \max}$ is defined similarly. Therefore, we keep track of $a_{k, \min}$ and $a_{k, \max}$ and resample candidates' threshold when the range $[a_{k, \min}, a_{k, \max}]$ changes due to sample addition/removal.


\subsection{Theoretical Results}

Due to the page limit, all detailed proofs of the following theorems are provided in Appendix \ref{apdx:thm}.

We first demonstrate that $\dynfrs$'s approach to sample addition and removal suits the definition of exact (un)learning (Section \ref{sec:exact}), validating the unlearning efficacy of $\dynfrs$:

\begin{theorem}
	Sample addition and removal for the $\dynfrs$ framework are exact.
\end{theorem}

Next, we establish the theoretical bound for time efficiency of $\dynfrs$ across different aspects.
Conventionally, for an ERT node $u$ and a certain attribute $a$, finding the best split of attribute $a$ requires a time complexity of $\mathcal O(|S_u| \log |S_u|)$. In this work, we propose a more efficient $\mathcal O(|S_u| \log s)$ algorithm (see Appendix \ref{apdx:thm}, Lemma \ref{lem:1}) to find the desired split by turning it into a range addition and query problem. Since $|S_u| \gg s$ in most cases, and usually $\log_2 s \le 5$, this new algorithm is advantageous in both theoretical bounds and practical performance.

For $\dynfrs$ with $T$ trees, each having a maximum depth $d_{\max}$, and considering $p$ attributes per node, the time complexity for training on a training set with $n = |S|$ is derived as:

\begin{theorem}
	Training $\dynfrs$ yields a time complexity of $\mathcal O(q T d_{\max}pn \log s)$.
\end{theorem}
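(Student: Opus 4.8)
The plan is to bound the training time one tree at a time, using the per-attribute split cost from Lemma~\ref{lem:1}, and then sum over the $T$ trees via the occurrence guarantee of $\occ(q)$.

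First I would bound the work done at a single ERT node $u$ holding sample set $S_u$. Constructing $u$ amounts to: drawing $p$ random attributes ($\mathcal O(p)$); for each attribute, generating and storing its $s$ candidate thresholds together with their split statistics; selecting the best split; and, if $u$ is not a leaf, partitioning $S_u$ into $S_{u_l}$ and $S_{u_r}$ ($\mathcal O(|S_u|)$). By Lemma~\ref{lem:1}, finding the best split on a single attribute costs $\mathcal O(|S_u|\log s)$, so over all $p$ attributes this is $\mathcal O(p|S_u|\log s)$. The candidate bookkeeping is $\mathcal O(ps)$; this is absorbed into the preceding term because at any node that actually splits we may assume $|S_u|\ge s$ (and otherwise one simply uses the at most $|S_u|$ realizable thresholds as candidates, also $\mathcal O(|S_u|)$, with $\log s$ still a valid upper bound on $\log\min(s,|S_u|)$). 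Hence the total work at $u$ is $\mathcal O(p|S_u|\log s)$.

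Next I would aggregate over one tree $\tree_t$ with training set $S^{(t)}$ by grouping its nodes by depth. The sample sets of nodes at a fixed depth $\ell$ are pairwise disjoint subsets of $S^{(t)}$, so $\sum_{u \text{ at depth } \ell}|S_u|\le|S^{(t)}|$ (only the $\le$ direction is needed, so early-terminating branches cause no trouble). Summing the per-node bound over one level therefore gives $\mathcal O(p|S^{(t)}|\log s)$, and summing over the at most $d_{\max}$ levels gives a per-tree bound of $\mathcal O(d_{\max}\,p\,|S^{(t)}|\log s)$.

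Finally I would sum over all $T$ trees and invoke $\occ(q)$. Since $\occ(q)$ assigns each of the $n$ training samples to exactly $k=\lceil qT\rceil$ trees, the deterministic identity $\sum_{t=1}^{T}|S^{(t)}|=kn$ holds, and $k=\mathcal O(qT)$ (assuming $qT\ge 1$). Therefore the total training cost is $\sum_{t=1}^{T}\mathcal O\!\big(d_{\max}p|S^{(t)}|\log s\big)=\mathcal O\!\big(d_{\max}p\log s\textstyle\sum_{t}|S^{(t)}|\big)=\mathcal O\!\big(d_{\max}p\log s\cdot kn\big)=\mathcal O\!\big(qTd_{\max}pn\log s\big)$, as claimed. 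I do not anticipate a genuine obstacle: the only points requiring care are the level-wise disjointness argument and confirming that per-node candidate generation is dominated by the split-finding cost; the rest is the routine summation above, and the sharpness of the bound comes entirely from Lemma~\ref{lem:1} (the $\log s$ rather than $\log|S_u|$ factor) and from $\occ(q)$ (the factor $qT$ rather than $T$).
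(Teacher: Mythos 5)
Your proposal is correct and follows essentially the same route as the paper: per-node cost $\mathcal O(p|S_u|\log s)$ from Lemma~\ref{lem:1}, a level-wise disjointness argument bounding the per-tree sum by $d_{\max}$ times the tree's training-set size, and then the $\occ(q)$ occurrence guarantee to produce the $qTn$ factor. Your one refinement — using the deterministic identity $\sum_{t}|S^{(t)}|=\lceil qT\rceil n$ rather than the paper's per-tree approximation that each root holds ``about $qn$'' samples — is actually slightly cleaner, since it avoids any appeal to expected tree sizes.
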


$\dynfrs$ also achieves an outstanding time complexity for sample addition/removal. For clarity, we define $n_{\text{aff}}$ as the sum of sample size ($|S_u|$) of all node $u$ s.t. $u$ is met by the sample addition/removal request, and a change in range occurs, and $c$ be the number of attributes whose range has changed. Further, denote $n_{\text{lzy}}$ as the sum of sample size ($|S_u|$) of all node $u$ s.t. $u$ is tagged due to this request. Based on observations described in Section \ref{sec:lzy}, we claim the following:

\begin{theorem}
	Modification (sample addition or removal) in $\dynfrs$ yields a time complexity of $\mathcal O(q T d_{\max} ps)$ if no attribute range changes occur while $\mathcal O(q T d_{\max} ps + c n_{\text{aff}} \log s)$ otherwise.
\end{theorem}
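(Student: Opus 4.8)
The plan is to decompose the cost of a single modification request along the three axes on which $\dynfrs$ operates: across trees (controlled by $\occ(q)$), down a single root-to-leaf path inside each affected tree (controlled by $\lzy$ and Observation (1) of Section~\ref{sec:lzy}), and within each visited node (controlled by the ERT split-statistics bookkeeping). First I would bound the number of affected trees: by the $\occ(q)$ construction, every sample --- whether being deleted or being newly added --- is associated with exactly $k = \lceil qT \rceil$ trees, and all other trees are untouched, so the total cost is $k = \mathcal O(qT)$ times the per-tree cost.

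Next, within one affected tree $\tree_t$, I would argue via Observation (1) that the request descends a single path. If it reaches an already-tagged node, it terminates in $\mathcal O(1)$ (Algorithm~\ref{alg:lzy}, line~7). Otherwise, at an untagged node $u$ on the path, I update the split statistics of all $p\cdot s$ stored candidates, each in $\mathcal O(1)$, and recompute the $\arg\min$ over those $ps$ candidates; both steps are $\mathcal O(ps)$. If $\spl[\star]{u}$ is unchanged, then since the split partitions $S_u$ into two disjoint parts exactly one child contains the modified sample and the request recurses there; if $\spl[\star]{u}$ changes, I tag $u$ and stop (the subtree below $u$ is discarded in $\mathcal O(1)$). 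Either way the work at $u$ is $\mathcal O(ps)$, the path has length at most $d_{\max}$, so the per-tree cost is $\mathcal O(d_{\max}\,ps)$ and the total is $\mathcal O(qT\,d_{\max}\,ps)$, which is the first bound and holds when no attribute range changes.

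Then I would add the range-change correction term. Tracking $a_{k,\min}$ and $a_{k,\max}$ per attribute per node lets us detect a range change in $\mathcal O(1)$ at each visited node. Whenever the range of some attribute $a_k$ changes at a node $u$ on the path, the thresholds $w_{k,1\cdots s}$ must be resampled and their split statistics rebuilt from $S_u$; by Lemma~\ref{lem:1} this costs $\mathcal O(|S_u|\log s)$ per such attribute, hence $\mathcal O(c\,|S_u|\log s)$ at $u$. A range change at $u$ can occur only if the modified sample passes through $u$, so $u$ is met by the request, and summing $|S_u|$ over all such nodes is exactly $n_{\text{aff}}$ by definition; the extra work is therefore $\mathcal O(c\,n_{\text{aff}}\log s)$, giving the second bound $\mathcal O(qT\,d_{\max}\,ps + c\,n_{\text{aff}}\log s)$.

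The main obstacle is making the second step rigorous: one must argue that the request genuinely touches only a single path per tree, which rests on the correctness of $\lzy$ --- that once a node is tagged a subsequent modification need not descend further (the pending reconstruction subsumes it, retraining being the strongest unlearning), and that an unchanged best split routes the sample into exactly one subtree. One also has to check that a range-induced resampling at $u$ does not cascade: a changed best split at $u$ triggers a tag and ends the descent, while an unchanged best split leaves the path structure below $u$ intact, so the $c\,n_{\text{aff}}\log s$ term is genuinely additive rather than being multiplied by $d_{\max}$. Treating sample addition symmetrically to deletion (the new sample joins $\lceil qT\rceil$ freshly drawn trees and then triggers the same path update) is routine once the deletion case is settled.
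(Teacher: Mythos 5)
Your proposal is correct and follows essentially the same decomposition as the paper's proof: $\lceil qT\rceil$ affected trees by the $\occ(q)$ construction, a single root-to-leaf path of length at most $d_{\max}$ per tree with $\mathcal O(ps)$ constant-time statistic updates per node, and an additive $\mathcal O(c\,n_{\text{aff}}\log s)$ term from invoking Lemma~\ref{lem:1} at each range-changed node. Your treatment is in fact somewhat more careful than the paper's, making explicit the early termination at tagged nodes and the non-cascading nature of the resampling cost.
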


\begin{theorem}
	Query in $\dynfrs$ yields a time complexity of $\mathcal O(Td_{\max})$ if no lazy tag is met, while $\mathcal O(Td_{\max} + p n_{\text{lzy}} \log s)$ otherwise.
\end{theorem}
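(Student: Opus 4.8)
The plan is to decompose a querying request into $T$ independent root-to-leaf descents, one per tree, and to bound the work of each descent by separating ordinary traversal steps from the reconstruction steps that a lazy tag triggers. For the tag-free case, fix a tree $\tree_t$ and follow the query sample down from the root: at any node $u$ carrying no tag, the cached best split $\spl[\star]{u}$ is available, so deciding whether to branch to $u_l$ or $u_r$ costs $\mathcal{O}(1)$. Since every root-to-leaf path has length at most $d_{\max}$, the descent in $\tree_t$ costs $\mathcal{O}(d_{\max})$, and summing over the $T$ trees gives $\mathcal{O}(T d_{\max})$. This establishes the first bound whenever no lazy tag is met.

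Next I would analyze what happens when the descent reaches a tagged node $u$. By the $\lzy$ procedure (Algorithm~\ref{alg:lzy}), the query must grow $u$: compute its best split, partition $S_u$ into $S_{u_l}$ and $S_{u_r}$, create the two children, clear the tag on $u$, and place a tag on each child; the query then continues into whichever child contains the sample. Partitioning $S_u$ and initializing the children's sample sets cost $\mathcal{O}(|S_u|)$. Computing the best split uses, for each of the $p$ candidate attributes, the $\mathcal{O}(|S_u| \log s)$ routine of Lemma~\ref{lem:1}, for a total of $\mathcal{O}(p |S_u| \log s)$, which dominates. Hence reconstructing one tagged node $u$ costs $\mathcal{O}(p |S_u| \log s)$. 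Because the child on the query path is again tagged, this repeats down the path until a leaf is reached; the reconstructed nodes are exactly the tagged nodes met by the request, and their depths remain bounded by $d_{\max}$, so no additional traversal term arises. Summing the per-node cost over all reconstructed nodes $u$ and invoking the identity $\sum_u |S_u| = n_{\text{lzy}}$ from the definition of $n_{\text{lzy}}$ gives a total reconstruction cost of $\mathcal{O}(p \, n_{\text{lzy}} \log s)$. Adding the $\mathcal{O}(T d_{\max})$ plain-traversal bound yields $\mathcal{O}(T d_{\max} + p \, n_{\text{lzy}} \log s)$; note this subsumes the first bound, since $n_{\text{lzy}} = 0$ when no tag is met.

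The main obstacle is the accounting at the tagged nodes. One must verify that growing a tagged node is genuinely confined to $\mathcal{O}(p |S_u| \log s)$: the freshly created children must be left tagged, so their own $ps$ candidate split statistics are \emph{not} recomputed eagerly, and both the partition of $S_u$ and the construction of the auxiliary structure required by Lemma~\ref{lem:1} must fit within this budget (they are each $\mathcal{O}(|S_u|)$ up to the Lemma~\ref{lem:1} factor). One must also confirm that each node lying on the query path inside a just-grown subtree is reconstructed exactly once during the request, so that the sum of $|S_u|$ over reconstructed nodes is precisely $n_{\text{lzy}}$ with no double counting across the $T$ trees. A secondary point worth spelling out is that the off-path sibling child receives a tag but is never visited by this query, hence contributes nothing to its cost — consistent with $n_{\text{lzy}}$ summing only over nodes met by the request — and that no attribute-range resampling term (the $c \, n_{\text{aff}}$ phenomenon of the modification theorem) appears, because a grown node computes its thresholds from scratch on the current range rather than patching a stale range.
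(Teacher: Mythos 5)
Your proposal is correct and follows essentially the same route as the paper's proof: bound the tag-free traversal by $\mathcal{O}(d_{\max})$ per tree over $T$ trees, then charge each tagged node $u$ met by the query the $\mathcal{O}(p|S_u|\log s)$ cost of computing its best split via Lemma~\ref{lem:1}, and sum over such nodes to get $\mathcal{O}(p\,n_{\text{lzy}}\log s)$. Your extra bookkeeping (children left tagged, off-path sibling not visited, no double counting) is consistent with the paper's argument, which states the same accounting more tersely.
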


\section{Experiments}

In this section, we empirically evaluate the $\dynfrs$ framework on the predictive performance, machine unlearning efficiency, and response latency in the online mixed data stream setting.

\subsection{Implementation}



Due to the page limit, this part is moved to Appendix \ref{apdx:impl}.

\subsubsection{Baselines}

We use DaRE \citep{DaRE}, HedgeCut \citep{HedgeCut}, and OnlineBoosting \citep{OnlineBoosting} as baseline models. Although OnlineBoosting employs a different learning algorithm, it is included due to its superior performance in batch unlearning. Additionally, we included the Random Forest Classifier implementation (we call it \textit{Vanilla} in tables below) from scikit-learn to provide an additional comparison of predictive performance. For all baseline models, we adhere to the instructions provided in the original papers and use the same parameter settings. More details regarding the baselines are in Appendix \ref{apdx:baseline}.

\subsubsection{Datasets}

\begin{table}[!ht]
\small
\centering
\vspace{-1\baselineskip}
\caption{Datasets specifications. (\#\,train: number of training samples; \#\,test: number of testing samples; \%\,pos: percentage of positive samples; \#\,attr: number of attributes; \#\,attr-hot: number of attributes after one-hot encoding; \#\,cat: number of categorical attributes.)}
\vspace{1\baselineskip}
\label{tab:datasets}
\begin{tabular}{L{1.2cm}|lllccc} \toprule

Datasets    & \#\,train & \#\,test  & \%\,pos   & \#\,attr  & \#\,attr-hot  & \#\,cat
\\ \midrule
Purchase    & 9864      & 2466      & .154      & 17        & 17            & 0     \\
Vaccine     & 21365     & 5342      & .466      & 36        & 184           & 36    \\
Adult       & 32561     & 16281     & .239      & 13        & 107           & 8     \\
Bank        & 32950     & 8238      & .113      & 20        & 63            & 10    \\
Heart       & 56000     & 14000     & .500      & 12        & 12            & 0     \\
Diabetes    & 81412     & 20354     & .461      & 43        & 253           & 36    \\
NoShow      & 88421     & 22106     & .202      & 17        & 98            & 2     \\
Synthetic   & 800000    & 200000    & .500      & 40        & 40            & 0     \\
Higgs       & 8800000   & 2200000   & .530      & 28        & 28            & 0     \\

\bottomrule
\end{tabular}
\end{table}

We test $\dynfrs$ on 9 binary classification datasets that vary in size, positive sample proportion, and attribute types. The technical details of these data sets can be found in Table \ref{tab:datasets}. For better predictive performance, we apply one-hot encoding for categorical attributes (attributes whose values are discrete classes, such as country, occupation, etc.). Further details regarding datasets are offered in the Appendix \ref{apdx:dataset}.


\subsubsection{Metrics}

For predictive performance, we evaluate all the models with accuracy (number of correct predictions divided by the number of tests) if the dataset has less than $21\%$ positive samples or AUC-ROC (the area under receiver operating characteristic curve) otherwise.

To evaluate models' unlearning efficiency, we follow \cite{DaRE} and use the term \textit{boost} standing for the speedup relative to the naïve retraining approach (i.e., the number of samples unlearned when naïvely retraining unlearns 1 sample). Each model's naïve retraining procedure is implemented in the same programming language as the model itself. Additionally, we report the time elapsed during the unlearning process for direct comparison.

\subsection{Predictive Performance} \label{sec:acc}

\begin{table*}[!ht]
\small
\centering
\vspace{-1\baselineskip}
\caption{Predictive performance ($\uparrow$) comparison between models. Each cell contains the accuracy or AUC-ROC score with standard deviation in a smaller font. The best-performing model is bolded.}
\vspace{1\baselineskip}
\label{tab:acc}
\newcommand\len{1.3cm}
\begin{tabular}{L{1.2cm}|C{\len}C{\len}C{\len}C{\len}C{\len}C{\len}} \toprule

\parbox{\len}{Datasets} &
\parbox{\len}{DaRE} &
\parbox{\len}{HedgeCut} &
\parbox{\len}{Vanilla} &
\parbox{\len}{Online\\Boosting} &
\parbox{\len}{\dynfrs\\$(q = 0.1)$} &
\parbox{\len}{\dynfrs\\$(q = 0.2)$} \\
\midrule

Purchase &
.9327{\tiny$\pm.001$} &
.9118{\tiny$\pm.001$} &
\textbf{.9372}{\tiny$\pm.001$} &
.9207{\tiny$\pm.000$} &
.9327{\tiny$\pm.001$} &
.9359{\tiny$\pm.001$} \\

Vaccine &
.7916{\tiny$\pm.003$} &
.7706{\tiny$\pm.002$} &
.7939{\tiny$\pm.002$} &
\textbf{.8012}{\tiny$\pm.000$} &
.7911{\tiny$\pm.001$} &
.7934{\tiny$\pm.002$} \\

Adult &
.8628{\tiny$\pm.001$} &
.8428{\tiny$\pm.001$} &
.8637{\tiny$\pm.000$} &
.8503{\tiny$\pm.000$} &
.8633{\tiny$\pm.001$} &
\textbf{.8650}{\tiny$\pm.001$} \\

Bank &
.9420{\tiny$\pm.000$} &
.9350{\tiny$\pm.000$} &
.9414{\tiny$\pm.001$} &
\textbf{.9436}{\tiny$\pm.000$} &
.9417{\tiny$\pm.000$} &
\textbf{.9436}{\tiny$\pm.000$} \\

Heart &
.7344{\tiny$\pm.001$} &
.7195{\tiny$\pm.001$} &
.7342{\tiny$\pm.001$} &
.7301{\tiny$\pm.000$} &
.7358{\tiny$\pm.002$} &
\textbf{.7366}{\tiny$\pm.000$} \\

Diabetes &
.6443{\tiny$\pm.001$} &
.6190{\tiny$\pm.000$} &
.6435{\tiny$\pm.001$} &
.6462{\tiny$\pm.000$} &
.6453{\tiny$\pm.001$} &
\textbf{.6470}{\tiny$\pm.002$} \\

NoShow &
.7361{\tiny$\pm.001$} &
.7170{\tiny$\pm.000$} &
\textbf{.7387}{\tiny$\pm.000$} &
.7269{\tiny$\pm.000$} &
.7335{\tiny$\pm.001$} &
.7356{\tiny$\pm.000$} \\

Synthetic &
.9451{\tiny$\pm.000$} &
/ &
.9441{\tiny$\pm.000$} &
.9309{\tiny$\pm.000$} &
.9424{\tiny$\pm.000$} &
\textbf{.9454}{\tiny$\pm.000$} \\

Higgs &
.7441{\tiny$\pm.000$} &
/ &
.7434{\tiny$\pm.000$} &
.7255{\tiny$\pm.000$} &
.7431{\tiny$\pm.000$} &
\textbf{.7475}{\tiny$\pm.000$} \\
\bottomrule

\end{tabular}
\end{table*}

\begin{figure}[!ht]
\begin{minipage}[t]{0.49\columnwidth}
	\centering
	\includegraphics{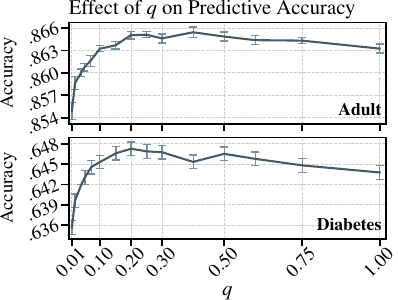}
	\caption{$\dynfrs$ with different $q$ is tested on the dataset Adult and Diabetes, and the tendency is shown by the curve with the standard deviation shown by error bars.}
	\label{fig:eff_q}
\end{minipage}\hfill
\begin{minipage}[t]{0.5\columnwidth}
	\centering
	\includegraphics{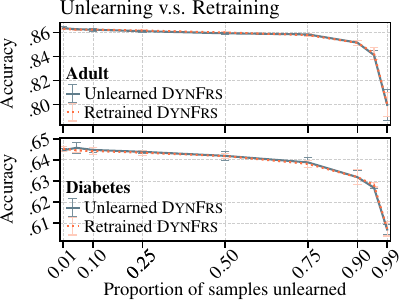}
	\caption{$\dynfrs$'s unlearning exactness. The blue-gray solid line represents the accuracy tendency of the unlearned model, while the orange dotted line represents that of the retrained model.}
	\label{fig:unl_acc}
\end{minipage}
\end{figure}

We evaluate the predictive performance of $\dynfrs$ in comparison with 4 other models in 9 datasets. The detailed results are listed in Table \ref{tab:acc}.
In 6 out of 9 datasets, $\dynfrs(q = 0.2)$ outperforms all other models in terms of predictive performance, while OnlineBoosting shows an advantage in the Vaccine and Bank dataset, and scikit-learn Random Forest comes first in Purchase and NoShow.
These results show that $\occ(0.2)$ sometimes improves the forest's predictive performance.
Looking closely at $\dynfrs(q = 0.1)$, we observe that its accuracy is similar to that of DaRE and the scikit-learn Random Forest.
All the hyperparameters used in $\dynfrs$ are listed in the Appendix \ref{apdx:hyperpara}.

The effects of $q$ are assessed on the two most commonly tested datasets (Adult and Diabetes) in RF classification.
From Fig.~\ref{fig:eff_q}, an acute drop in predictive accuracy is obvious when $q < 0.1$.
For the Adult dataset, $\dynfrs$'s predictive accuracy peaks at $q = 0.4$, while a similar tendency is observed for the Diabetes dataset with the peak at $q = 0.2$.
However, to avoid tuning on $q$, we suggest choosing $q = 0.2$ to optimize accuracy and $q = 0.1$ to improve unlearning efficiency.

To determine whether $\dynfrs(q = 0.1)$ achieves exact unlearning (Section \ref{sec:exact}), we compare it with a retrain-from-scratch model with different sample removal proportions.
Specifically, Let $S^-$ denote the set of samples requested for removal, and we compare the predictive performance of an unlearned model --- trained on complete training set $S$ and subsequently unlearning all samples in $S^-$ --- with the retrained model, which is trained directly on $S \backslash S^-$.
Note that both models adopt the same training algorithm.
As depicted in Fig.~\ref{fig:unl_acc}, the performance of both models is nearly identical across different removal proportions (i.e., $|S^-| / |S|$).
This close alignment suggests that, empirically, the unlearned model and the retrained model follow the same distribution (Equation \ref{eqn:exact}).

\subsection{Sequential Unlearning}

\begin{figure*}[ht]
	\centering
	\includegraphics{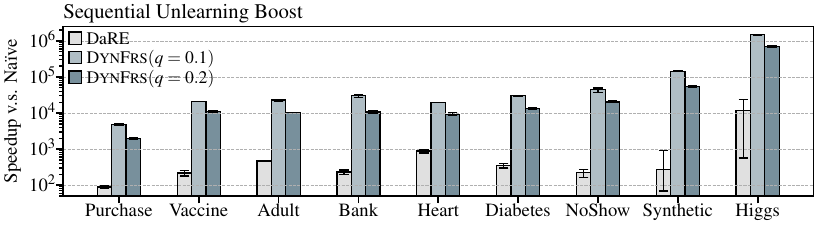}
	\caption{Comparison of sequential unlearning boost ($\uparrow$) between $\dynfrs$ with different $q$ and DaRE, and error bars represent the minimum and maximum values among five trials.}
	\label{fig:unl_boost}
\end{figure*}

In this section, we evaluate the efficiency of $\dynfrs$ in sequential unlearning settings, where models process unlearning requests individually.
In experiments, models are fed with a sequence of unlearning requests until the time elapsed exceeds the time naïve retraining model unlearns one sample.
To ensure that $\dynfrs$ does not gain an unfair advantage by merely tagging nodes without modifying tree structures, we disable $\lzy$ and use a Random Forest without $\occ(q)$ as the naïve retraining method for $\dynfrs$.
As shown in Fig.~\ref{fig:unl_boost}, $\dynfrs$ consistently outperforms DaRE, the state-of-the-art method in sequential unlearning for RFs, across all datasets in both $q = 0.1$ and $q = 0.2$ settings, and achieves a $22\times$ to $523\times$ speedup relative to DaRE.
Furthermore, $\dynfrs$ demonstrates a more stable performance compared to DaRE who exhibits large error bars in Higgs.

HedgeCut is excluded from the plot as it is unable to unlearn more than $0.1\%$ of the training set, making boost calculation often impossible.
OnlineBoosting is also omitted due to poor performance, achieving boosts of less than 10.
This inefficiency stems from its slow unlearning efficiency of individual instances (see Fig.~\ref{fig:unl_time} upper-left plot).
Appendix \ref{apdx:res} contains more experiment results.

\subsection{Batch Unlearning}

\begin{figure}[ht]
\begin{minipage}[h]{0.49\columnwidth}

	\centering
	\includegraphics{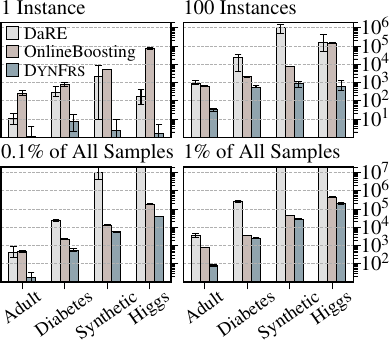}
	\vspace{-0.5\baselineskip}
	\caption{Comparison of the batch unlearning runtime ($\downarrow$) of $\dynfrs$ and two baseline methods under with 4 different unlearning batch sizes.} 
	\label{fig:unl_time}
	
\end{minipage}\hfill
\begin{minipage}[h]{0.48\columnwidth}

	\centering
	\includegraphics{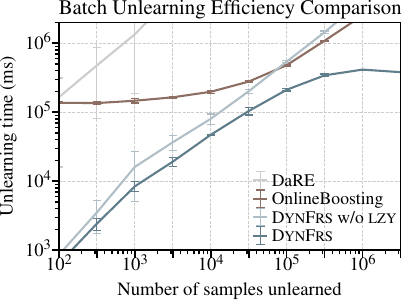}
	\vspace{-0.5\baselineskip}
	\caption{Comparison of the batch unlearning runtime tendency ($\downarrow$) between models using different numbers of unlearned samples in Higgs. Runtimes are measured in milliseconds (ms).}
	\label{fig:unl_batch}

\end{minipage}
\end{figure}

We measure each model's batch unlearning performance based on execution time ($\dynfrs$'s runtime includes retraining all the tagged subtree), where one request contains multiple samples to be unlearned.
The results indicate that $\dynfrs$ significantly outperforms other models across all datasets and batch sizes (see Appendix \ref{apdx:res} for complete results).
In the lower-left and lower-right plots of Fig.~\ref{fig:unl_time}, DaRE demonstrates excessive time requirements for unlearning $0.1\%$ and $1\%$ of samples in the large-scale datasets Synthetic and Higgs, primarily due to its inefficiency when dealing large batches.
In contrast, OnlineBoosting achieves competitive performance on these datasets, but $\dynfrs$ completes the same requests in half the time.
Furthermore, OnlineBoosting shows the poorest performance in single-instance unlearning (Fig.~\ref{fig:unl_time} upper-left), and this inability limits its effectiveness in sequential unlearning.
Overall, Fig.~\ref{fig:unl_time} demonstrates that $\dynfrs$ is the only model that excels in both sequential and batch unlearning contexts.

Further investigation of each model's behavior across different unlearning batch sizes is presented in Fig.~\ref{fig:unl_batch}.
Both DaRE and $\dynfrs$ w/o $\lzy$ exhibit linear trends in the plot, indicating their lack of specialization for batch unlearning.
Meanwhile, the curve of OnlineBoosting maintains a stationary performance for batch sizes up to $10^4$ but experiences a rapid increase in runtime beyond this threshold.
Notably, the curve for $\dynfrs$ stays below those of all other models, demonstrating its advantage across all batch sizes in dataset Higgs.
Additionally, $\dynfrs$ is the only model whose runtime converges for large batches, attributed to the presence of $\lzy$.

\subsection{Online Mixed Data Stream}

In this section, we introduce the online mixed data stream setting, which satisfies:
(1) there are 3 types of requests: sample addition, sample removal, and querying;
(2) requests arrive in a mixed sequence, with no prior knowledge of future requests until the current one is processed;
(3) the amount of addition and removal requests are roughly balanced, allowing unchanged model's hyperparameters;
(4) the goal is to minimize the latency in responding to each request.
Currently, no other tree-based models but $\dynfrs$ can handle sample addition/removal and query simultaneously.

Due to the page limit, this part is moved to Appendix \ref{apdx:stream}.

\section{Conclusion}

In this work, we introduced $\dynfrs$, a framework that supports efficient machine unlearning in Random Forests.
Our results show that $\dynfrs$ is 4-6 orders of magnitude faster than the naïve retraining model and 2-3 orders of magnitude faster than the state-of-the-art Random Forest unlearning approach DaRE \citep{DaRE}.
$\dynfrs$ also outperforms OnlineBoosting \citep{OnlineBoosting} in batch unlearning. In the context of online data streams, $\dynfrs$ demonstrated strong performance, with an average latency of 0.12 ms on sample addition/removal in the large-scale dataset Higgs.
This efficiency is due to the combined effects of the subsampling method $\occ(q)$, the lazy tag strategy $\lzy$, and the robustness of Extremely Randomized Trees, bringing Random Forests closer to real-world applicability in dynamic data environments.

For future works, we will investigate more strategies that take advantage of the tree structure and accelerate Random Forest in the greatest extent for real-world application.

\section{Reproducibility Statement}

\begin{itemize}
	\item \textbf{Code}: We provide pseudocode to help understand this work in Appendix \ref{apdx:code}. All our code is publicly available at: \url{https://github.com/shurongwang/DynFrs}.
	\item \textbf{Datasets}: All datasets are either included in the repo, or a description for how to download and preprocess the dataset is provided. All datasets are public and raise no ethical concerns.
	\item \textbf{Hyperparameters}: All parameters of our proposed framework are provided in Appendix \ref{apdx:hyperpara}, Table \ref{tab:hyperpara}.
	\item \textbf{Environment}: Details of our experimental setups are provided in Appendix \ref{apdx:impl}.
	\item \textbf{Random Seed}: we use C++'s mt19937 module with a random device for all random behavior, with the random seed determined by system time.
\end{itemize}

\bibliographystyle{plainnat}
\bibliography{citation}

\begin{thebibliography}{39}
\providecommand{\natexlab}[1]{#1}
\providecommand{\url}[1]{\texttt{#1}}
\expandafter\ifx\csname urlstyle\endcsname\relax
  \providecommand{\doi}[1]{doi: #1}\else
  \providecommand{\doi}{doi: \begingroup \urlstyle{rm}\Url}\fi

\bibitem[Alam et~al.(2019)Alam, Rahman, and Rahman]{RF-Med}
Md.~Zahangir Alam, M.~Saifur Rahman, and M.~Sohel Rahman.
\newblock A random forest based predictor for medical data classification using
  feature ranking.
\newblock \emph{Informatics in Medicine Unlocked}, 15:\penalty0 100180, 2019.
\newblock ISSN 2352-9148.
\newblock \doi{https://doi.org/10.1016/j.imu.2019.100180}.
\newblock URL
  \url{https://www.sciencedirect.com/science/article/pii/S235291481930019X}.

\bibitem[Alpaydin and Alimoglu(1996)]{Pen}
E.~Alpaydin and Fevzi. Alimoglu.
\newblock {Pen-Based Recognition of Handwritten Digits}.
\newblock UCI Machine Learning Repository, 1996.
\newblock URL \url{https://doi.org/10.24432/C5MG6K}.

\bibitem[Baldi et~al.(2014)Baldi, Sadowski, and Whiteson]{Higgs}
Pierre Baldi, Peter Sadowski, and Daniel Whiteson.
\newblock Searching for exotic particles in high-energy physics with deep
  learning.
\newblock \emph{Nature Communications}, 5:\penalty0 4308, 2014.
\newblock \doi{10.1038/ncomms5308}.
\newblock URL \url{https://doi.org/10.1038/ncomms5308}.

\bibitem[Basak et~al.(2019)Basak, Kar, Saha, Khaidem, and Dey]{RF-Market}
Suryoday Basak, Saibal Kar, Snehanshu Saha, Luckyson Khaidem, and Sudeepa~Roy
  Dey.
\newblock Predicting the direction of stock market prices using tree-based
  classifiers.
\newblock \emph{The North American Journal of Economics and Finance},
  47:\penalty0 552--567, 2019.
\newblock ISSN 1062-9408.
\newblock \doi{https://doi.org/10.1016/j.najef.2018.06.013}.
\newblock URL
  \url{https://www.sciencedirect.com/science/article/pii/S106294081730400X}.

\bibitem[Becker and Kohavi(1996)]{Adult}
Barry Becker and Ronny Kohavi.
\newblock {Adult}.
\newblock UCI Machine Learning Repository, 1996.
\newblock {DOI}: https://doi.org/10.24432/C5XW20.

\bibitem[Bourtoule et~al.(2021)Bourtoule, Chandrasekaran, Choquette-Choo, Jia,
  Travers, Zhang, Lie, and Papernot]{SISA}
Lucas Bourtoule, Varun Chandrasekaran, Christopher~A. Choquette-Choo, Hengrui
  Jia, Adelin Travers, Baiwu Zhang, David Lie, and Nicolas Papernot.
\newblock Machine unlearning.
\newblock In \emph{2021 IEEE Symposium on Security and Privacy (SP)}, pages
  141--159, 2021.
\newblock \doi{10.1109/SP40001.2021.00019}.

\bibitem[Breiman et~al.(1984)Breiman, Friedman, Stone, and Olshen]{CART}
L.~Breiman, J.~Friedman, C.J. Stone, and R.A. Olshen.
\newblock \emph{Classification and Regression Trees}.
\newblock Taylor \& Francis, 1984.
\newblock ISBN 9780412048418.
\newblock URL \url{https://books.google.com/books?id=JwQx-WOmSyQC}.

\bibitem[Breiman(2001)]{RF}
Leo Breiman.
\newblock Random {Forests}.
\newblock \emph{Machine Learning}, 45\penalty0 (1):\penalty0 5--32, October
  2001.
\newblock ISSN 1573-0565.
\newblock \doi{10.1023/A:1010933404324}.
\newblock URL \url{https://link.springer.com/article/10.1023/A:1010933404324}.
\newblock Company: Springer Distributor: Springer Institution: Springer Label:
  Springer Number: 1 Publisher: Kluwer Academic Publishers.

\bibitem[Brophy and Lowd(2021)]{DaRE}
Jonathan Brophy and Daniel Lowd.
\newblock Machine unlearning for random forests.
\newblock In Marina Meila and Tong Zhang, editors, \emph{Proceedings of the
  38th International Conference on Machine Learning}, volume 139 of
  \emph{Proceedings of Machine Learning Research}, pages 1092--1104. PMLR,
  18--24 Jul 2021.
\newblock URL \url{https://proceedings.mlr.press/v139/brophy21a.html}.

\bibitem[Bull et~al.(2016)Bull, Slavitt, and Lipstein]{Vaccine-1}
P.~Bull, I.~Slavitt, and G.~Lipstein.
\newblock Harnessing the power of the crowd to increase capacity for data
  science in the social sector.
\newblock In \emph{ICML \#Data4Good Workshop}, 2016.

\bibitem[Cao and Yang(2015)]{MunL}
Yinzhi Cao and Junfeng Yang.
\newblock Towards making systems forget with machine unlearning.
\newblock In \emph{2015 IEEE Symposium on Security and Privacy}, pages
  463--480, 2015.
\newblock \doi{10.1109/SP.2015.35}.

\bibitem[Cauwenberghs and Poggio(2000)]{SVM_MunL}
Gert Cauwenberghs and Tomaso Poggio.
\newblock Incremental and decremental support vector machine learning.
\newblock In T.~Leen, T.~Dietterich, and V.~Tresp, editors, \emph{Advances in
  Neural Information Processing Systems}, volume~13. MIT Press, 2000.
\newblock URL
  \url{https://proceedings.neurips.cc/paper_files/paper/2000/file/155fa09596c7e18e50b58eb7e0c6ccb4-Paper.pdf}.

\bibitem[Cheng et~al.(2023)Cheng, Dasoulas, He, Agarwal, and Zitnik]{GNN_MunL}
Jiali Cheng, George Dasoulas, Huan He, Chirag Agarwal, and Marinka Zitnik.
\newblock {GNNDELETE}: {A} {GENERAL} {STRATEGY} {FOR} {UNLEARNING} {IN} {GRAPH}
  {NEURAL} {NETWORKS}.
\newblock 2023.

\bibitem[Demirović et~al.(2022)Demirović, Lukina, Hebrard, Chan, Bailey,
  Leckie, Ramamohanarao, and Stuckey]{MurTree}
Emir Demirović, Anna Lukina, Emmanuel Hebrard, Jeffrey Chan, James Bailey,
  Christopher Leckie, Kotagiri Ramamohanarao, and Peter~J. Stuckey.
\newblock Murtree: Optimal decision trees via dynamic programming and search.
\newblock \emph{Journal of Machine Learning Research}, 23\penalty0
  (26):\penalty0 1--47, 2022.
\newblock URL \url{http://jmlr.org/papers/v23/20-520.html}.

\bibitem[DrivenData(2019)]{Vaccine-2}
DrivenData.
\newblock Flu shot learning: Predict h1n1 and seasonal flu vaccines.
\newblock
  \url{https://www.drivendata.org/competitions/66/flu-shot-learning/data/},
  2019.

\bibitem[Dua and Graff(2019)]{UCI}
Dheeru Dua and Casey Graff.
\newblock {UCI} machine learning repository, 2019.
\newblock URL \url{http://archive.ics.uci.edu/ml}.

\bibitem[Fanaee-T(2013)]{Bike}
Hadi Fanaee-T.
\newblock {Bike Sharing}.
\newblock UCI Machine Learning Repository, 2013.
\newblock {DOI}: https://doi.org/10.24432/C5W894.

\bibitem[Genuer et~al.(2017)Genuer, Poggi, Tuleau-Malot, and
  Villa-Vialaneix]{m-out-of-n_Bootstrap}
Robin Genuer, Jean-Michel Poggi, Christine Tuleau-Malot, and Nathalie
  Villa-Vialaneix.
\newblock Random forests for big data.
\newblock \emph{Big Data Research}, 9:\penalty0 28--46, 2017.
\newblock ISSN 2214-5796.
\newblock \doi{https://doi.org/10.1016/j.bdr.2017.07.003}.
\newblock URL
  \url{https://www.sciencedirect.com/science/article/pii/S2214579616301939}.

\bibitem[Geurts et~al.(2006)Geurts, Ernst, and Wehenkel]{ERT}
Pierre Geurts, Damien Ernst, and Louis Wehenkel.
\newblock Extremely randomized trees.
\newblock \emph{Machine Learning}, 63\penalty0 (1):\penalty0 3--42, April 2006.
\newblock ISSN 1573-0565.
\newblock \doi{10.1007/s10994-006-6226-1}.
\newblock URL \url{https://doi.org/10.1007/s10994-006-6226-1}.

\bibitem[Goh et~al.(2021)Goh, Cammarata, Voss, Carter, Petrov, Schubert,
  Radford, and Olah]{add-2}
Gabriel Goh, Nick Cammarata, Chelsea Voss, Shan Carter, Michael Petrov, Ludwig
  Schubert, Alec Radford, and Chris Olah.
\newblock Multimodal neurons in artificial neural networks.
\newblock \emph{Distill}, 6\penalty0 (3):\penalty0 e30, 2021.

\bibitem[Hyafil and Rivest(1976)]{DT-NP}
Laurent Hyafil and Ronald~L. Rivest.
\newblock Constructing optimal binary decision trees is np-complete.
\newblock \emph{Information Processing Letters}, 5\penalty0 (1):\penalty0
  15--17, 1976.
\newblock ISSN 0020-0190.
\newblock \doi{https://doi.org/10.1016/0020-0190(76)90095-8}.
\newblock URL
  \url{https://www.sciencedirect.com/science/article/pii/0020019076900958}.

\bibitem[Kaggle(2016)]{NoShow}
Kaggle.
\newblock Medical appointment no shows.
\newblock \url{https://www.kaggle.com/joniarroba/noshowappointments}, 2016.

\bibitem[Kaggle(2018)]{Heart}
Kaggle.
\newblock Cardiovascular disease dataset.
\newblock
  \url{https://www.kaggle.com/datasets/sulianova/cardiovascular-disease-dataset},
  2018.

\bibitem[Lin et~al.(2023)Lin, Chung, Lao, and Zhao]{OnlineBoosting}
Huawei Lin, Jun~Woo Chung, Yingjie Lao, and Weijie Zhao.
\newblock Machine unlearning in gradient boosting decision trees.
\newblock In \emph{Proceedings of the 29th ACM SIGKDD Conference on Knowledge
  Discovery and Data Mining}, KDD '23, page 1374–1383, New York, NY, USA,
  2023. Association for Computing Machinery.
\newblock ISBN 9798400701030.
\newblock \doi{10.1145/3580305.3599420}.
\newblock URL \url{https://doi.org/10.1145/3580305.3599420}.

\bibitem[Louppe(2015)]{RF_Intro}
Gilles Louppe.
\newblock \emph{Understanding {Random} {Forests}: {From} {Theory} to
  {Practice}}.
\newblock PhD thesis, arXiv, June 2015.
\newblock URL \url{http://arxiv.org/abs/1407.7502}.
\newblock arXiv:1407.7502 [stat].

\bibitem[Mehta et~al.(2022)Mehta, Pal, Singh, and Ravi]{L-CODEC}
Ronak Mehta, Sourav Pal, Vikas Singh, and Sathya~N. Ravi.
\newblock Deep unlearning via randomized conditionally independent hessians.
\newblock In \emph{Proceedings of the IEEE/CVF Conference on Computer Vision
  and Pattern Recognition (CVPR)}, pages 10422--10431, June 2022.

\bibitem[Morcos et~al.(2018)Morcos, Barrett, Rabinowitz, and Botvinick]{add-1}
Ari~S. Morcos, David~G.T. Barrett, Neil~C. Rabinowitz, and Matthew Botvinick.
\newblock On the importance of single directions for generalization.
\newblock In \emph{International Conference on Learning Representations}, 2018.
\newblock URL \url{https://openreview.net/forum?id=r1iuQjxCZ}.

\bibitem[Moro et~al.(2014)Moro, Cortez, and Rita]{Bank}
Sérgio Moro, Paulo Cortez, and Paulo Rita.
\newblock A data-driven approach to predict the success of bank telemarketing.
\newblock \emph{Decision Support Systems}, 62:\penalty0 22--31, 2014.
\newblock ISSN 0167-9236.
\newblock \doi{https://doi.org/10.1016/j.dss.2014.03.001}.
\newblock URL
  \url{https://www.sciencedirect.com/science/article/pii/S016792361400061X}.

\bibitem[Qiao et~al.(2024)Qiao, Zhang, Tang, and Wei]{ApproxUnlearn}
Xinbao Qiao, Meng Zhang, Ming Tang, and Ermin Wei.
\newblock Efficient online unlearning via hessian-free recollection of
  individual data statistics.
\newblock \emph{arXiv preprint arXiv:2404.01712}, 2024.

\bibitem[Quinlan(1993)]{C4.5}
J.~Ross Quinlan.
\newblock \emph{C4.5: Programs for Machine Learning}.
\newblock Morgan Kaufmann Publishers Inc., San Francisco, CA, USA, 1993.
\newblock ISBN 1558602402.

\bibitem[Sakar and Kastro(2018)]{Purchase}
C.~Sakar and Yomi Kastro.
\newblock {Online Shoppers Purchasing Intention Dataset}.
\newblock UCI Machine Learning Repository, 2018.
\newblock {DOI}: https://doi.org/10.24432/C5F88Q.

\bibitem[Schelter et~al.(2021)Schelter, Grafberger, and Dunning]{HedgeCut}
Sebastian Schelter, Stefan Grafberger, and Ted Dunning.
\newblock Hedgecut: Maintaining randomised trees for low-latency machine
  unlearning.
\newblock In \emph{Proceedings of the 2021 International Conference on
  Management of Data}, SIGMOD '21, page 1545–1557, New York, NY, USA, 2021.
  Association for Computing Machinery.
\newblock ISBN 9781450383431.
\newblock \doi{10.1145/3448016.3457239}.
\newblock URL \url{https://doi.org/10.1145/3448016.3457239}.

\bibitem[Schelter et~al.(2023)Schelter, Ariannezhad, and de~Rijke]{kNN_MunL}
Sebastian Schelter, Mozhdeh Ariannezhad, and Maarten de~Rijke.
\newblock Forget me now: Fast and exact unlearning in neighborhood-based
  recommendation.
\newblock In \emph{Proceedings of the 46th International ACM SIGIR Conference
  on Research and Development in Information Retrieval}, SIGIR '23, page
  2011–2015, New York, NY, USA, 2023. Association for Computing Machinery.
\newblock ISBN 9781450394086.
\newblock \doi{10.1145/3539618.3591989}.
\newblock URL \url{https://doi.org/10.1145/3539618.3591989}.

\bibitem[Slate(1991)]{Letter}
David Slate.
\newblock {Letter Recognition}.
\newblock UCI Machine Learning Repository, 1991.
\newblock URL \url{https://doi.org/10.24432/C5ZP40}.

\bibitem[Strack et~al.(2014)Strack, DeShazo, Gennings, Olmo, Ventura, Cios, and
  Clore]{Diabetes}
Beata Strack, Jonathan~P. DeShazo, Chris Gennings, Juan~L. Olmo, Sebastian
  Ventura, Krzysztof~J. Cios, and John~N. Clore.
\newblock Impact of hba1c measurement on hospital readmission rates: Analysis
  of 70,000 clinical database patient records.
\newblock \emph{BioMed Research International}, 2014\penalty0 (1):\penalty0
  781670, 2014.
\newblock \doi{https://doi.org/10.1155/2014/781670}.
\newblock URL
  \url{https://onlinelibrary.wiley.com/doi/abs/10.1155/2014/781670}.

\bibitem[Sun et~al.(2023)Sun, Wang, Wang, Nie, Wei, Liu, Wang, and
  Qu]{FakeLazy}
Nan Sun, Ning Wang, Zhigang Wang, Jie Nie, Zhiqiang Wei, Peishun Liu, Xiaodong
  Wang, and Haipeng Qu.
\newblock Lazy machine unlearning strategy for random forests.
\newblock In Long Yuan, Shiyu Yang, Ruixuan Li, Evangelos Kanoulas, and Xiang
  Zhao, editors, \emph{Web Information Systems and Applications}, pages
  383--390, Singapore, 2023. Springer Nature Singapore.
\newblock ISBN 978-981-99-6222-8.

\bibitem[Wu et~al.(2023)Wu, Zhu, Li, and He]{DeltaBoosting}
Zhaomin Wu, Junhui Zhu, Qinbin Li, and Bingsheng He.
\newblock Deltaboost: Gradient boosting decision trees with efficient machine
  unlearning.
\newblock \emph{Proc. ACM Manag. Data}, 1\penalty0 (2), June 2023.
\newblock \doi{10.1145/3589313}.
\newblock URL \url{https://doi.org/10.1145/3589313}.

\bibitem[Zhang and Min(2016)]{RF-RecoSys}
Heng-Ru Zhang and Fan Min.
\newblock Three-way recommender systems based on random forests.
\newblock \emph{Knowledge-Based Systems}, 91:\penalty0 275--286, 2016.
\newblock ISSN 0950-7051.
\newblock \doi{https://doi.org/10.1016/j.knosys.2015.06.019}.
\newblock URL
  \url{https://www.sciencedirect.com/science/article/pii/S0950705115002373}.
\newblock Three-way Decisions and Granular Computing.

\bibitem[Zhu et~al.(2024)Zhu, Zhang, and Wang]{add-3}
Wentao Zhu, Zhining Zhang, and Yizhou Wang.
\newblock Language models represent beliefs of self and others.
\newblock \emph{arXiv preprint arXiv:2402.18496}, 2024.

\end{thebibliography}

\clearpage

\appendix

\section{Appendix}

\subsection{Pseudocode} \label{apdx:code}

\begin{minipage}[t]{0.48\linewidth}

\begin{algorithm}[H]
\small
\caption{Training $\dynfrs$ and Unlearning Samples with $\occ(q)$}
\begin{algorithmic}[1]

\Procedure{Distribute}{$S$, $T$, $q$}
	\State $k \gets T \times q$
	\State $S^{(1)}, \ldots, S^{(T)} \gets \varnothing, \cdots, \varnothing$
	\For{$\samp{i} \in S$}
		\State $j_{1 \cdots k} \gets$ randomly and independently sample $k$ different integers from $[T]$
		\For{$t \in j_{1 \cdots k}$}
			\State $S^{(t)} \gets S^{(t)} \cup \{ \samp{i} \}$
		\EndFor
	\EndFor
	\State \Return $S^{(1)}, \ldots, S^{(T)}$
\EndProcedure

\State

\Procedure{Train}{$S$, $T$, $q$}
	\State $\forest \gets \varnothing$
	\State $S^{(1)}, \ldots, S^{(T)} \gets \fn[Distribute]{S, T, q}$
	\For{$t \gets 1 \cdots T$}
		\State $\tree_t \gets \fn[BuildTree]{S^{(t)}}$
		\State $\forest \gets \forest \cup \{ \tree_t \}$
	\EndFor
	\State \Return $\forest$
\EndProcedure

\State

\Procedure{Add}{$\forest$, $\samp{}$}
	\State $j_{1 \cdots k} \gets$ randomly and independently sample $k$ different integers from $[T]$
	\For{$t \gets j_{1 \cdots k}$}
		\State $\fn[Add]{\tree_t, \samp{}}$
	\EndFor
\EndProcedure

\State

\Procedure{Remove}{$\forest$, $\samp{}$}
	\For{$t \gets 1 \cdots T$}
		\If{$\samp{} \in S^{(t)}$}
			\State $\fn[Remove]{\tree_t, \samp{}}$
		\EndIf
	\EndFor
\EndProcedure
\end{algorithmic}
\label{alg:occ}
\end{algorithm}

\end{minipage}
\begin{minipage}[t]{0.5\linewidth}

\begin{algorithm}[H]
\footnotesize
\caption{Unlearning and Querying in Trees with \lzy}
\begin{algorithmic}[1]




\Procedure{Remove}{$\tree$, $\samp{}$} \Comment $\textproc{Add}$ is similar
	\State $\fn[Delete]{\textproc{root}(\tree), \samp{}}$
\EndProcedure

\State

\Procedure{Remove}{$u$, $\samp{}$}
	\State $S_u \gets S_u \backslash \samp{}$ \Comment implementation of $\dynfrs$ does not actually store $S_u$, so this line stands for updating all split statistics of node $u$
	\If{$\lzy_u = 1$}
		\Return
	\ElsIf{$\fn[BestSplitChanged]{u}$}
		\State $\lzy_u \gets 1$
	\ElsIf{$\lnot \fn[IsLeaf]{u}$}
		\If{$\mathbf x_{a_u^\star} \le w_u^\star$}
			\State $\fn[Remove]{u_l, \samp{}}$
		\Else
			\State $\fn[Remove]{u_r, \samp{}}$
		\EndIf
	\EndIf
\EndProcedure

\State

\Procedure{Query}{$\tree$, $\samp{}$}
	\State $\fn[Query]{\textproc{root}(\tree), \samp{}}$
\EndProcedure

\State

\Procedure{Query}{$u$, $\samp{}$}
	\If{$\lzy_u = 1$}
		\State $\fn[Split]{u}$
		\State $\lzy_{u} \gets 0$
		\State $\lzy_{u_l} \gets 1$, $\lzy_{u_r} \gets 1$
	\EndIf
	\If{$\fn[IsLeaf]{u}$}
		\State \Return $S_u$
	\EndIf
	\If{$\mathbf x_{a_u^\star} \le w_u^\star$}
		\State $\fn[Query]{u_l, \samp{}}$
	\Else
		\State $\fn[Query]{u_r, \samp{}}$
	\EndIf
\EndProcedure
\end{algorithmic}
\label{alg:lzy}
\end{algorithm}

\end{minipage}

\subsection{Proofs} \label{apdx:thm}

\setcounter{theorem}{0}

\begin{theorem}
	Sample addition and removal for the $\dynfrs$ framework are exact.
\end{theorem}

\begin{proof}

We prove that the subsampling method $\occ(q)$ maintains the exactness of $\dynfrs$.
Let random variable $o_{i, t} \triangleq [\samp{i} \in S^{(t)}]$ denotes whether $\samp{i}$ occurs in $S^{(t)}$.
In $\occ(q)$, each sample $\samp{i}$ is distributed to $\lceil qT \rceil$ distinct trees, with the selection of these trees being independent of other samples $\samp{j} \in S$ for $j \ne i$.
Thus, $o_{i, \cdot}$ is independent from $o_{j, \cdot}$ for $j \ne i$.
However, $o_{i, 1}, o_{i, 2}, \ldots, o_{i, T}$ are dependent on each other, constrainted by $\forall t \in [T]$, $o_{i, t} \in \{ 0, 1 \}$, $\sum_{t = 0}^T o_{i, t} = \lceil qT \rceil$, and we say they follow a joint distribution $\mathcal B(T, q)$.

Now, let $S'^{(1)}, S'^{(2)}, \ldots, S'^{(T)}$ denotes the training sets for each tree generated by applying $\occ(q)$ to the modified training set $S'$, and let $o'_{i, t} \triangleq [\samp{i} \in S'^{(t)}]$.
Then, when removing sample $\samp{i}$ (i.e., $S' = S \backslash \samp{i}$), we have $(o_{j, 1}, \ldots, o_{j, T}) \sim \mathcal B(T, q)$ and $(o'_{j, 1}, \ldots, o'_{j, T}) \sim \mathcal B(T, q)$ for $i \ne j$, and $o_{i, \cdot} = 0$ (because $\samp{i}$ is removed) and $o'_{i, \cdot} = 0$.
Notably, $\mathcal B(T, q)$ depends on $T$ and $q$ only, but not on training samples $S$.
This shows that simply setting $o_{i, \cdot} = 0$ ensures that $\occ(q)$ with sample removed maintains the same distribution as applying $\occ(q)$ on the modified training set.

Similarily, when adding $\samp{i}$ (i.e., $S' = S \cup \samp{i}$), $(o_{j, 1}, \ldots, o_{j, T})$ and $(o'_{j, 1}, \ldots, o'_{j, T})$ follow the same distribution $\mathcal B(T, q)$ for $j \ne i$.
While $(o_{j, 1}, \ldots, o_{j, T})$ is generated from $\mathcal B(T, q)$ for addition, it is equivalent to $(o'_{i, 1}, \ldots, o'_{i, T})$ for following the same distribution.
Therefore, $\occ(q)$ with sample added maintains the same distribution as applying $\occ(q)$ on the modified training set.

Next, we prove that the addition and removal operations are exact within a specific $\dynfrs$ tree. When no change in range of attribute $a$ ($[\min \{ \mathbf x_{i, a} \mid { \samp{i} \in S_u} \}, \max \{ \mathbf x_{i, a} \mid \samp{i} \in S_u \}]$) occurs, the candidate splits are sampled from the same uniform distribution making $\dynfrs$ and the retraining method are identical in distribution node's split candidates.
However, when a change in range occurs, $\dynfrs$ resamples all candidate splits and makes them stay in the same uniform distribution as those in the retraining method.
Consequently, $\dynfrs$ adjusts itself to remain in the same distribution with the retraining method.
Thus, sample addition and removal in $\dynfrs$ are exact.

\end{proof}

\begin{lemma}
For a certain Extremely Randomized Tree node $u$, and a specific attribute $a$, the time complexity of finding the best split of attribute $a$ is $\mathcal O(|S_u| \log s)$, assuming that $|S_u| \gg s$.
\label{lem:1}
\end{lemma}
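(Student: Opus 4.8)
The plan is to exhibit an explicit algorithm meeting the claimed bound and then account for its running time. Recall what ``finding the best split of attribute $a$'' entails for an ERT node $u$: we first draw $s$ candidate thresholds $w_1, \dots, w_s$ uniformly from $[\min\{x_{i,a} : \samp{i} \in S_u\}, \max\{x_{i,a} : \samp{i} \in S_u\}]$, where the two endpoints are obtained in a single $\mathcal{O}(|S_u|)$ scan; then, for each candidate $(a, w_j)$, we must compute its split statistics --- the number of samples of $S_u$ with $x_{i,a} \le w_j$ and the number of \emph{positive} such samples --- from which the criterion score $I(S_u, (a,w_j))$ follows in $\mathcal{O}(1)$ (binary classification; $\mathcal{O}(|\mathcal{Y}|)$ in general); finally we return the candidate with the smallest score. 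The naive realizations cost $\mathcal{O}(s|S_u|)$ (a fresh scan per candidate) or $\mathcal{O}(|S_u|\log|S_u|)$ (sorting $S_u$ once), and the content of the lemma is that neither a per-candidate rescan nor a full sort of $S_u$ is required --- only the $s$ candidates need be sorted.

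First I would sort only the $s$ candidate thresholds, in $\mathcal{O}(s\log s)$, to obtain $w_{(1)} \le \cdots \le w_{(s)}$, and allocate two difference arrays over these sorted positions, one tracking sample counts and one tracking positive-sample counts. Then each sample $\samp{i} \in S_u$ is handled in $\mathcal{O}(\log s)$: a binary search locates the smallest index $j$ with $x_{i,a} \le w_{(j)}$, or reports that $x_{i,a}$ exceeds all thresholds and so lies below no candidate; since the sample then lies below exactly the candidates with index $\ge j$ --- a suffix of the sorted list --- we record this with one $\mathcal{O}(1)$ range addition on the count array, and likewise on the positive-count array when $y_i$ is positive. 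After all $|S_u|$ samples have been processed, a single prefix-sum sweep of each difference array --- the $\mathcal{O}(s)$ range query --- materializes, for every sorted threshold $w_{(j)}$, the exact number of samples (and of positive samples) of $S_u$ below it; evaluating the criterion for all $s$ candidates and taking the argmin then costs a further $\mathcal{O}(s)$, each criterion evaluation being $\mathcal{O}(1)$ in the binary setting.

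Summing the phases gives $\mathcal{O}(|S_u|) + \mathcal{O}(s\log s) + \mathcal{O}(|S_u|\log s) + \mathcal{O}(s) = \mathcal{O}(|S_u|\log s)$, where the hypothesis $|S_u| \gg s$ --- in particular $s \le |S_u|$, so $s\log s \le |S_u|\log s$ --- is exactly what absorbs the threshold-sorting term into the dominant $|S_u|\log s$ term. Correctness is immediate once one checks that a sample with attribute value $v$ is counted below threshold $w_{(j)}$ precisely when $w_{(j)} \ge v$, i.e. precisely when $j$ is at least the index returned by the binary search, so that the suffix range additions followed by prefix sums reproduce the split statistics a full rescan would compute. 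I do not expect a genuine obstacle here; the only points needing a little care are the two boundary buckets (values below every threshold, which contribute to all candidates, versus values above every threshold, which contribute to none) and the verification that the criterion is $\mathcal{O}(1)$ per candidate for binary classification, so that the per-candidate work never reintroduces a factor of $|S_u|$. The substance of the lemma is the structural observation that the binary search is against the small sorted candidate set, and that a difference array collapses each sample's contribution to $\mathcal{O}(1)$ work after that $\mathcal{O}(\log s)$ lookup.
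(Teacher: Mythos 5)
Your proposal is correct and follows essentially the same route as the paper's proof: sort only the $s$ thresholds in $\mathcal O(s\log s)$, binary-search each sample against the sorted candidates in $\mathcal O(\log s)$, record its contribution as an $\mathcal O(1)$ update to a difference array, then recover the split statistics $b_{1\cdots s}$ and $c_{1\cdots s}$ by a prefix-sum sweep and evaluate the criterion in $\mathcal O(s)$. The accounting $\mathcal O(|S_u|\log s + s\log s) = \mathcal O(|S_u|\log s)$ under $|S_u|\gg s$ matches the paper's conclusion exactly.
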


\begin{proof}

Conventionally, for each tree node $u$ and an attribute $a$, we uniformly samples $s$ thresholds $w_{a, 1 \cdots s}$ from $[\min \{ \mathbf x_{i, a} \mid \samp{i} \in S_u \}, \max \{ \mathbf x_{i, a} \mid \samp{i} \in S_u \}]$.
Then, we try to split $S_u$ with each threshold and look for split statistics that are: (1) the number of samples in the left or right child (i.e., $|S_{u_l}|$ and $|S_{u_r}|$), and (2) the number of positive samples in left or right child ($|S_{u_l, +}|$ and $|S_{u_r, +}|$), which are the requirements for calculating the empirical criterion scores.

One approach, as used by prior works, first sort all samples $S_u$ by $\mathbf x_{\cdot, a}$ in ascending order, and then sort thresholds $w_{a, 1 \cdots s}$ in ascending order. These sortings has a time complexity of $\mathcal O(|S_u| \log |S_u|)$ and $\mathcal O(s \log s)$, respectively. After that, a similar technique used in the merge sort algorithm is used to find the desired split statistics in $\mathcal O(|S_u| + s)$.

To get rid of the costly sorting on $S_u$, we sort $w_{a, 1 \cdots s}$ $\mathcal O(s \log s)$ and then iterate through all samples and calculate the changes each sample brings to candidates' split statistics. For convenience, let
\begin{align*}
b_k &\triangleq \big | \{ \samp{i} \mid \samp{i} \in S_u \land \mathbf x_{i, a} \le w_{a, k} \} \big |, \\
c_k &\triangleq \big | \{ \samp{i} \mid \samp{i} \in S_u \land \mathbf x_{i, a} \le w_{a, k} \land y_i = + \} \big |,
\end{align*}
which are crucial split statistics for calculating the empirical criterion score.
We start with setting $b_{1 \cdots s}$ and $c_{1 \cdots s}$ as all zeros.
Then, for a sample $\samp{i} \in S_u$, it will cause an increment in $b_{s' \cdots s}$ for some $s'$ satisfying $\mathbf x_{i, a} \le w_{s'}$ and $\mathbf x_{i, a} > w_{s' - 1}$.
Given that $w_{a, 1 \cdots s}$ are sorted, all $k$, ($s' \le k \le s$) satisfy $\mathbf x_{i, a} \le w_{a, k}$, while $\mathbf x_{i, a} > w_{a, k}$ for all $1 \le k < s'$. $s'$ can be easily found by binary search in $\mathcal O(\log s)$, then adding $1$ to $b_{s'}, b_{s' + 1}, \ldots, b_s$ is the only thing left.
Use a loop for range addition is clearly $\mathcal O(s)$, but insteading of finding $b_{1 \cdots s}$, we keep track of $d_{1 \cdots s}$, where $d_k \triangleq b_{k} - b_{k - 1}$.
So increment $b_{s' \cdots s}$ can be replace by $d_{s'} \gets d_{s'} + 1$, which is $\mathcal O(1)$.
When all samples are processed, we construct $b_{1 \cdot s}$ from $d_{1 \cdots s}$, where prefix sums $b_k \gets b_{k - 1} + d_k$ help solve it in $\mathcal O(s)$.

For every sample $\samp{i} \in S$, we need to find $s'$ in $\mathcal O(\log s)$ (Algorithm \ref{alg:spl}: line 10), and perform increment in $d_s'$ in $\mathcal O(1)$ (Algorithm \ref{alg:spl}: line 11), which results in a time complexity of $\mathcal O(|S_u| \log s)$ in this part (Algorithm \ref{alg:spl}: line 9-14).
Meanwhile, the prefix sum is executed after all samples are processed (Algorithm \ref{alg:spl}: line 15-18), and its execution time is bounded by $\mathcal O(s)$.
Luckily, $c_{1 \cdots s}$ can be calculated in a similar manner, and with both $b_{1 \cdots s}$ and $c_{1 \cdots s}$ ready, we can obtain the empirical criterion score for each candidate split (Algorithm \ref{alg:spl}: line 21-28), and this has a time complexity of $\mathcal O(s)$ assuming calculating criterion scores to be $\mathcal O(1)$.

Since $|S_u| \gg s$, the term $\mathcal O(|S_u| \log s)$ dominates in time complexity, with the binary search (Algorithm \ref{alg:spl}: line 10) being the threshold.
It is noteworthy that adopting exponential search to find $s'$ can result in an expected $\mathcal O(\log \log s)$ time complexity since $w_{a, 1 \cdots s}$ are uniformly distributed.
However, binary search outperforms exponential search in practice, so we conclude with a time complexity of $\mathcal O(|S_u| \log s)$ for finding the best split of attribute $a$ in node $u$ when $|S_u| \gg s$.

\begin{algorithm}[!ht]
	\small
	\centering
	\caption{Find the best split for attribute $a$}
	\begin{algorithmic}[1]
	\Procedure{FindAttributeBestSplit}{$S_u, a, s$}
		\State $R \gets \left[ \min \{ \mathbf x_{i, a} \mid \samp{i} \in S_u \}, \max \{ \mathbf x_{i, a} \mid \samp{i} \in S_u \} \right]$
		\State $w_{a, 1 \cdots s} \gets \text{sample $s$ i.i.d. values from } R$
		\State $w_{a, 1 \cdots s} \gets \fn[Sort]{w_{a, 1 \cdots s}}$ \Comment sort $w_{a, 1 \cdots s}$ in ascending order.
		\State $b_{1 \cdots s} \gets \{0, \cdots, 0\}$ \Comment create an 1-D array of size $s$
		\State $c_{1 \cdots s} \gets \{0, \cdots, 0\}$ \Comment create an 1-D array of size $s$
		\State $n \gets |S_u|$
		\State $n_+ \gets 0$
		\For{$\samp{i} \in S_u$}
			\State $s' \gets \fn[LowerBound]{w_{1 \cdots s}, \mathbf x_{i, a}}$ \Comment find the largest position $k$ s.t. $w_{k - 1} < \mathbf x_{i, a}$
			\State $b_{s'} \gets b_{s'} + 1$
			\State $c_{s'} \gets c_{s'} + y_i$
			\State $n_+ \gets n_+ + y_i$
		\EndFor
		\For{$k \gets 2 \ldots s$} \Comment find prefix sum
			\State $b_k \gets b_k + b_{k - 1}$
			\State $c_k \gets c_k + c_{k - 1}$
		\EndFor
		\State $I^\star \gets \infty$
		\State $w_a^{\star} \gets 0$
		\For{$k \gets 1 \ldots s$}
			\State $I_k \gets I(b_k, c_k, n - b_k, n_+ - c_k)$
			\State \Comment $I(|S_{u_l}|, |S_{u_l, +}|, |S_{u_r}|, |S_{u_r, +}|)$ returns the empirical critierion score for split $(a, w_{a, k})$ in $\mathcal O(1)$ using either Gini index $I_G$ (this work) or Shannon's entropy $I_E$.
			\If{$I_k < I^\star$}
				\State $I^\star \gets I_k$
				\State $w_a^{\star} \gets w_{a, k}$
			\EndIf
		\EndFor
		\State \Return $(a, w_a^{\star})$
	\EndProcedure
	\end{algorithmic}
	\label{alg:spl}
\end{algorithm}

\end{proof}

Given $q < 1$, the proportion of trees each sample is assigned to, $T$, the number of trees in the forest, $d_{\max}$, the maximum depth of each tree, $p$, the number of candidate attributes, $s$, the number of candidate splits for each attribute (usually $s \le 30$), and $n = |S|$, size of the training set, we now prove the following:

\begin{theorem} \label{thm:train}
	Training $\dynfrs$ yields a time complexity of $\mathcal O(q T d_{\max}pn \log s)$.
\end{theorem}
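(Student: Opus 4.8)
The plan is to split the training cost into two pieces --- the sample-to-tree distribution carried out by $\occ(q)$, and the construction of each of the $T$ trees --- and to bound each piece separately, using Lemma~\ref{lem:1} for the per-node work.

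First I would account for the distribution step. By construction, \textproc{Distribute} (Algorithm~\ref{alg:occ}) assigns every one of the $n$ training samples to exactly $k = \lceil qT \rceil$ trees, so its running time is $\mathcal O(nk) = \mathcal O(qTn)$; moreover the total sample mass $\sum_{t=1}^{T} |S^{(t)}| = nk$ is \emph{deterministic}, not merely an expectation, which is what lets the final bound be a worst-case bound. Under the standing assumption $qT \ge 1$ we have $k = \Theta(qT)$, a fact I would state explicitly up front.

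Second I would bound the cost of \textproc{BuildTree} on a fixed set $S^{(t)}$. The key structural fact is that, at any fixed depth $\ell$, the node sample sets $S_u$ over the nodes $u$ at that depth are pairwise disjoint, since each chosen split partitions a node's samples into its two children; hence $\sum_{u \text{ at depth } \ell} |S_u| \le |S^{(t)}|$. At a single node $u$ we evaluate $p$ candidate attributes, and Lemma~\ref{lem:1} gives $\mathcal O(|S_u|\log s)$ per attribute (the $\mathcal O(s\log s)$ threshold sort being dominated since $|S_u| \gg s$), so node $u$ costs $\mathcal O(p|S_u|\log s)$. Summing over one depth level gives $\mathcal O(p|S^{(t)}|\log s)$, and summing over the at most $d_{\max}$ levels gives $\mathcal O(d_{\max}\,p\,|S^{(t)}|\log s)$ for tree $\tree_t$. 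Summing over the forest, $\sum_{t=1}^{T} d_{\max}\,p\,|S^{(t)}|\log s = d_{\max}\,p\log s \sum_{t=1}^{T} |S^{(t)}| = d_{\max}\,p\log s \cdot nk = \mathcal O(qTd_{\max}pn\log s)$, which dominates the $\mathcal O(qTn)$ distribution cost because $d_{\max}\,p\log s \ge 1$; this yields the claimed complexity.

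The main obstacle I anticipate is the clean handling of the per-level partition argument together with small-node edge cases: when $|S_u|$ is comparable to $s$, the per-attribute sorting term $\mathcal O(s\log s)$ is no longer dominated by $\mathcal O(|S_u|\log s)$. I would resolve this by either invoking the assumption $|S_u| \gg s$ used throughout the paper (equivalently, a leaf-size stopping rule), or by observing that the aggregate of all such terms over a tree is at most $\mathcal O(\#\mathrm{nodes}\cdot s\log s)$, which is absorbed into the dominant term. Everything else --- the geometric-sum-free level decomposition, the $\lceil\cdot\rceil$ bookkeeping, and collecting the dominating term --- is routine.
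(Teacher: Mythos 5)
Your proposal is correct and follows essentially the same route as the paper: per-node cost $\mathcal O(p|S_u|\log s)$ from Lemma~\ref{lem:1}, the level-wise disjointness of the $S_u$ giving at most $|S^{(t)}|$ samples per depth, a factor $d_{\max}$ over levels, and aggregation over the $T$ trees with the $q$ factor coming from $\occ(q)$. Your one refinement --- summing the exact identity $\sum_t |S^{(t)}| = n\lceil qT\rceil$ over the whole forest rather than using the paper's ``each root holds approximately $qn$ samples'' per-tree estimate --- is slightly more careful, since it makes the bound deterministic rather than resting on an expected per-tree size, but it does not change the substance of the argument.
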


\begin{proof}

For certain tree and a specific node $u$, we find the best split among $p$ randomly selected attributes $a_{1 \cdots p}$, and we call $\fn[FindAttributeBestSplit]{S_u, a_i, s}$ (Algorithm \ref{alg:spl}) $p$ times for each $i \in [p]$.
From Lemma \ref{lem:1}, finding the best split for the node $u$ has a time complexity of $\mathcal O(p|S_u| \log s)$.
Then, summing $|S_u|$ over all tree nodes $u$ on that tree, we have $\sum_{u} |S_u| \le d_{\max} qn$, since the root of the tree contains about $\lceil qT \rceil n / T \approx qn$ samples, and each layer has at most the same amount of samples as the root (layer 0).
Therefore, the time complexity for training one $\dynfrs$ tree can be bounded by $\mathcal O(d_{\max} qn \log s)$.
Since there are $T$ independent trees in the forest, the time complexity for training a $\dynfrs$ forest is $\mathcal O(qTd_{\max} pn \log s)$.

\end{proof}

\begin{theorem}
	Modification (sample addition or removal) in $\dynfrs$ yields a time complexity of $\mathcal O(q T d_{\max} ps)$ if no attribute range changes occurs while $\mathcal O(q T d_{\max} ps + c n_{\text{aff}} \log s)$ otherwise (where $c$ denotes the number of attributes affected, and $n_{\text{aff}}$ denotes the sum of sample size among all affected nodes met by this modification request).
\end{theorem}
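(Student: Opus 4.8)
The plan is to charge the cost of one modification request first across the trees it reaches and then, inside each such tree, along the single path that \textproc{Delete}/\textproc{Add} traverses. The first ingredient is $\occ(q)$: every sample lives in exactly $k=\lceil qT\rceil=\mathcal O(qT)$ trees, so a deletion enters at most $k$ trees (those $\tree_t$ with $\samp{}\in S^{(t)}$) and an addition exactly $k$ (Algorithm~\ref{alg:occ}). Hence it suffices to bound the per-tree work and multiply by $\mathcal O(qT)$.

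Within one tree, assuming no range change, I would establish that the recursion of Algorithm~\ref{alg:lzy} follows a single root-to-leaf path of length at most $d_{\max}$. At a visited node $u$ it first updates the split statistics of all $ps$ candidates, each in $\mathcal O(1)$ (the line ``$S_u\gets S_u\setminus\samp{}$''), and then either returns because $\lzy_u=1$, or sets $\lzy_u\gets 1$ and returns because the best split changed — detected in $\mathcal O(ps)$ by re-minimising the criterion over the stored candidates, with no subtree retraining because $\lzy$ defers it — or recurses into exactly one of $u_l,u_r$, since a split partitions $S_u$ into two disjoint parts. Thus each visited node costs $\mathcal O(ps)$, one tree costs $\mathcal O(d_{\max}ps)$, and the total is $\mathcal O(qTd_{\max}ps)$.

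To handle range changes, I would observe that modifying $\samp{}$ can shift $\min\{\mathbf x_{i,a}\mid\samp{i}\in S_u\}$ or the matching $\max$ for a tracked attribute $a$ at a visited node $u$; then the thresholds $w_{a,1\cdots s}$ must be resampled and that attribute's candidate statistics rebuilt from scratch, costing $\mathcal O(|S_u|\log s)$ per such attribute by Lemma~\ref{lem:1}. With at most $c$ attributes changing range at any node, node $u$ incurs $\mathcal O(c\,|S_u|\log s)$, and summing over the nodes met by the request at which a range change occurs gives $\mathcal O(c\,n_{\text{aff}}\log s)$ by the definition of $n_{\text{aff}}$. Adding this to the baseline yields $\mathcal O(qTd_{\max}ps)$ when no range change happens and $\mathcal O(qTd_{\max}ps+c\,n_{\text{aff}}\log s)$ otherwise; the \textproc{Add} case is symmetric to \textproc{Delete}.

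The hard part will be making the single-path claim airtight: it rests on the lazy-tag invariant that a tagged node carries no live subtree and that reaching a tagged node halts the descent, together with the disjointness of the two parts produced by a split. A secondary delicacy is keeping the charging clean, so that each range-changing node is billed its $\mathcal O(c|S_u|\log s)$ exactly once and the sum telescopes to $c\,n_{\text{aff}}\log s$ rather than something larger.
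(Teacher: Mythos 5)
Your proposal is correct and follows essentially the same route as the paper: charge $\mathcal O(ps)$ per node along a single root-to-leaf path of length at most $d_{\max}$ in each of the $\lceil qT\rceil$ affected trees, and invoke Lemma~\ref{lem:1} to add $\mathcal O(c\,n_{\text{aff}}\log s)$ for resampling when attribute ranges change. The extra care you take with the lazy-tag halting behaviour and the once-per-node charging is a welcome tightening but does not change the argument.
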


\begin{proof}
When no attribute range change occurs on each tree, the modification request traverses a path from the root to a leaf with at most $d_{\max}$ nodes.
For each node, we need to recalculate all the empirical criterion scores for all candidate splits $\mathcal O(ps)$.
Since $\occ(q)$ guarantees that only $\lceil qT \rceil$ trees are affected by the modification requests, at most $qTd_{\max}$ nodes need the recalculation.
So, the time complexity for one modification request yields $\mathcal O(qTd_{\max} ps)$.

When an attribute range occurs on $u$, it is necessary to call $\fn[FindAttributeBestSplit]{S_u, a, s}$ for $u$ and the affected attribute $a$.
Given that the affected nodes' sample sizes sum up to $n_{\text{aff}}$, and for each affected node, we need to resample at most $c \le p$ attributes, and then Lemma \ref{lem:1} entails that the time complexity for completing all resampling is an additional $\mathcal O(cn_{\text{aff}} \log s)$.

\end{proof}

\begin{theorem}
	Query in $\dynfrs$ yields a time complexity of $\mathcal O(Td_{\max})$ if no lazy tag is met, while $\mathcal O(Td_{\max} + p n_{\text{lzy}} \log s)$ otherwise (where $n_{\text{lzy}}$ denotes the sum of sample size among all nodes with lazy tag and met by this query).
\end{theorem}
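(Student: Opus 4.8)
The plan is to trace the recursion of \textproc{Query}($\tree$, $\samp{}$) on a single tree (Algorithm~\ref{alg:lzy}) and then sum over the $T$ independent trees. On one tree the recursion descends from the root along exactly one root-to-leaf path, since at each visited node $u$ it recurses into precisely the child selected by the comparison $\mathbf x_{a_u^\star} \le w_u^\star$. Hence the query visits at most $d_{\max}$ nodes per tree, and at an untagged node the work (the comparison plus the recursive call) is $\mathcal O(1)$. Summing this over all $T$ trees gives the baseline $\mathcal O(Td_{\max})$, which is the whole bound when no lazy tag is ever met.

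First I would account for tagged nodes. When the query reaches a node $u$ with $\lzy_u = 1$, it calls \textproc{Split}($u$), which recomputes the best split of $u$ by invoking \textproc{FindAttributeBestSplit} on each of the $p$ candidate attributes and then partitions $S_u$ into $S_{u_l}$ and $S_{u_r}$. By Lemma~\ref{lem:1} each attribute costs $\mathcal O(|S_u| \log s)$, and the partitioning (together with re-sampling the $p$ attributes and re-drawing thresholds from the current ranges) is $\mathcal O(|S_u|)$, so one \textproc{Split} at $u$ costs $\mathcal O(p|S_u|\log s)$. The tag on $u$ is then cleared and tags are pushed onto both children; since the query continues into whichever child lies on its path and that child is now tagged, \textproc{Split} is triggered again, and iterating, every node from the first tagged node down to the leaf on the query path is split during this query (a child that is genuinely a leaf contributes at most the same $\mathcal O(p|S_u|\log s)$, or $\mathcal O(1)$ with an early stopping-criterion check).

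Next I would collect these costs. Every node $u$ at which a \textproc{Split} is triggered is, at the instant it is visited, a node carrying a lazy tag and met by the query, so $|S_u|$ is one of the terms summed in $n_{\text{lzy}}$; therefore $\sum_{u \text{ split during the query}} |S_u| \le n_{\text{lzy}}$. Summing the per-\textproc{Split} bound over all such nodes in all $T$ trees gives $\mathcal O(p\log s)\cdot n_{\text{lzy}} = \mathcal O(p\,n_{\text{lzy}}\log s)$. Combining with the baseline traversal cost yields the total $\mathcal O(Td_{\max} + p\,n_{\text{lzy}}\log s)$, and with $n_{\text{lzy}} = 0$ it reduces to $\mathcal O(Td_{\max})$.

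The main obstacle I anticipate is the cascading-tag bookkeeping in the second step: one must verify that the nodes grown and re-tagged \emph{during} the query — the freshly rebuilt path below the first tagged node, whose subtree had been deleted by an earlier modification — are exactly the additional nodes whose sample sizes $n_{\text{lzy}}$ counts, so that the sum is neither under- nor over-counted. A secondary point is ensuring Lemma~\ref{lem:1}'s hypothesis $|S_u| \gg s$ (or the $\mathcal O(s\log s)$ fallback for small nodes) holds at each split so that the per-attribute cost is indeed $\mathcal O(|S_u|\log s)$, and confirming that all auxiliary work inside \textproc{Split} is dominated by that term.
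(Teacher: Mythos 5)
Your proposal is correct and follows essentially the same route as the paper: the baseline $\mathcal O(Td_{\max})$ from the single root-to-leaf traversal per tree, plus $\mathcal O(p|S_u|\log s)$ per tagged node via Lemma~\ref{lem:1}, summed into $\mathcal O(p\,n_{\text{lzy}}\log s)$ by the definition of $n_{\text{lzy}}$. In fact you are more careful than the paper on the one subtle point — that clearing a tag pushes tags onto both children so the query cascades through a chain of freshly tagged nodes down to a leaf, all of which must be (and are) counted in $n_{\text{lzy}}$ — which the paper's proof leaves implicit.
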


\begin{proof}
On each tree, the query starts with the root and ends at a leaf node, traversing a tree path with at most $d_{\max}$ nodes, and the query on $\dynfrs$ aggregates the results of all $T$ trees, therefore querying without bumping into a lazy tag yields a time complexity of $\mathcal O(Td_{\max})$. 

However, if the query reaches on a tagged node $u$, we need to perform a split on it, and by the proof of Theorem \ref{thm:train} and Lemma \ref{lem:1}, finding the best split of node $u$ calls function $\fn[FindAttributeBestSplit]{S_u, \cdot, s}$ $p$ times and results in a time complexity of $\mathcal O(p |S_u| \log s)$.
As $n_{\text{lzy}}$ denotes the sum of sample sizes of all nodes with lazy tags met by the query, handling these lazy tags requires an additional time complexity of $\mathcal O(pn_{\text{lzy}}\log s)$.

\end{proof}

\subsection{Implementation} \label{apdx:impl}

All of the experiments are conducted on a machine with AMD EPYC 9754 128-core CPU and 512 GB RAM in a Linux environment (Ubuntu 22.04.4 LTS), and all codes of $\dynfrs$ are written in C++ and compiled with the g++ 11.4.0 compiler and the -O3 optimization flag enabled. To guarantee fair comparison, all tests are run on a single thread and are repeated 5 times with the mean and standard deviation reported.

$\dynfrs$ is tuned using 5-fold cross-validation for each dataset, and the following hyperparameters are tuned using a grid search: Number of trees in the forest $T \in \{ 100, 150, 250 \}$, maximum depth of each tree $d_{\max} \in \{ 10, 15, 20, 25, 30, 40 \}$, and the number of sampled splits $s \in \{ 5, 15, 20, 30, 40 \}$.

\subsection{Baselines} \label{apdx:baseline}

HedgeCut and OnlineBoosting can not process real continuous input. Thus, all numerical attributes are discretized into 16 bins, as suggested in their works \citep{HedgeCut, OnlineBoosting}.
Both of them are not capable of processing samples with sparse attributes, so one-hot encoding is disabled for them.
Additionally, it is impossible to train Hedgecut on datasets Synthetic and Higgs in our setting due to its implementation issue, as its complexity degenerates to $\mathcal O(pn^2)$ sometimes and consumes more than 256 GB RAM during training.

\subsection{Datasets} \label{apdx:dataset}




\begin{description}
\item[Purchase] \citep{Purchase, UCI} is primarily used to predict online shopping intentions, i.e., users' determination to complete a transaction. The dataset was collected from an online bookstore built on an osCommerce platform.

\item[Vaccine] \citep{Vaccine-1, Vaccine-2} comes from data-mining competition in DrivenData. It contains 26,707 survey responses, which were collected between October 2009 and June 2010. The survey asked 36 behavioral and personal questions. We aim to determine whether a person received a seasonal flu vaccine.

\item[Adult] \citep{Adult, UCI} is extracted from the 1994 Census database by Barry Becker, and is used for predicting whether someone's income level is more than 50,000 dollars per year or not.

\item[Bank] \citep{Bank, UCI} is related to direct marketing campaigns of a Portuguese banking institution dated from May 2008 to November 2010. The goal is to predict if the client will subscribe to a term deposit based on phone surveys.

\item[Heart] \citep{Heart} is provided by Ulianova, and contains 70,000 patient records about cardiovascular diseases, with the label denoting the presence of heart disease.

\item[Diabetes] \citep{Diabetes, UCI} encompasses a decade (1999-2008) of clinical diabetes records from 130 hospitals across the U.S., covering laboratory results, medications, and hospital stays. The goal is to predict whether a patient will be readmitted within 30 days of discharge.

\item[Synthetic] \citep{NoShow} focuses on the patient's appointment information, such as date, number of SMS sent, and alcoholism, aiming to predict whether the patient will show up after making an appointment.

\item[Higgs] \citep{Higgs, UCI} consists of $1.1\times10^7$ signals characterized by 22 kinematic properties measured by detectors in a particle accelerator and 7 derived attributes. The goal is to distinguish between a background signal and a Higgs boson signal.

\end{description}

\subsection{Results} \label{apdx:res}

In this section, Table \ref{tab:train} presents the training time for each model, with OnlineBoosting being the fastest in most datasets while $\dynfrs$ ranks first among Random Forest based methods. Table \ref{tab:unl_1}, \ref{tab:unl_10}, \ref{tab:unl_100}, and \ref{tab:unl_.1p/1.p} despicts the runtime for model simultaneously unlearning 1, 10, 100 instances or $0.1\%$ and $1\%$ of all samples, where $\dynfrs$ consistently outperforms all others in all settings and all datasets.

\begin{table*}[!ht]
\small
\centering
\caption{Training time ($\downarrow$) of each model, measured in seconds (s), and the standard deviation is given with the same unit in a smaller font. ``/'' means the model is unable to train on that dataset.}
\label{tab:train}
\vspace{1\baselineskip}
\newcommand\len{1.3cm}
\begin{tabular}{L{1.2cm}|C{\len}C{\len}C{\len}C{\len}C{\len}} \toprule

\rotatebox{0}{\parbox{\len}{Datasets}} &    
\rotatebox{0}{\parbox{\len}{DaRE}} &
\rotatebox{0}{\parbox{\len}{HedgeCut}} &
\rotatebox{0}{\parbox{\len}{Online\\Boosting}} &
\rotatebox{0}{\parbox{\len}{\dynfrs\\$(q = 0.1)$}} &
\rotatebox{0}{\parbox{\len}{\dynfrs\\$(q = 0.2)$}}
\\ \midrule

Purchase
& 3.10{\tiny$\pm0.0$}
& 1.05{\tiny$\pm0.0$}
& \textbf{0.27}{\tiny$\pm0.0$}
& 0.38{\tiny$\pm0.0$}
& 0.72{\tiny$\pm0.0$}
\\

Vaccine
& 4.78{\tiny$\pm0.0$}
& 431{\tiny$\pm14$}
& \textbf{1.05}{\tiny$\pm0.0$}
& 1.12{\tiny$\pm0.0$}
& 2.27{\tiny$\pm0.0$}
\\

Adult
& 5.02{\tiny$\pm0.1$}
& 11.8{\tiny$\pm0.5$}
& 0.77{\tiny$\pm0.0$}
& \textbf{0.61}{\tiny$\pm0.0$}
& 1.15{\tiny$\pm0.0$}
\\

Bank
& 8.26{\tiny$\pm0.2$}
& 8.44{\tiny$\pm0.3$}
& \textbf{0.92}{\tiny$\pm0.0$}
& 1.15{\tiny$\pm0.0$}
& 2.37{\tiny$\pm0.0$}
\\

Heart
& 12.1{\tiny$\pm0.2$}
& 3.51{\tiny$\pm0.0$}
& \textbf{1.02}{\tiny$\pm0.0$}
& 1.04{\tiny$\pm0.0$}
& 1.96{\tiny$\pm0.0$}
\\

Diabetes
& 123{\tiny$\pm1.0$}
& 162{\tiny$\pm3.3$}
& \textbf{3.51}{\tiny$\pm0.0$}
& 8.67{\tiny$\pm0.0$}
& 18.2{\tiny$\pm0.0$}
\\

NoShow
& 65.4{\tiny$\pm0.4$}
& 28.1{\tiny$\pm0.3$}
& \textbf{1.68}{\tiny$\pm0.0$}
& 3.08{\tiny$\pm0.0$}
& 6.10{\tiny$\pm0.0$}
\\

Synthetic
& 1334{\tiny$\pm6.3$}
& /
& \textbf{40.7}{\tiny$\pm0.9$}
& 66.3{\tiny$\pm0.2$}
& 128{\tiny$\pm0.4$}
\\

Higgs
& 10793{\tiny$\pm48$}
& /
& \textbf{460}{\tiny$\pm13$}
& 548{\tiny$\pm1.2$}
& 1120{\tiny$\pm1.0$}
\\
\bottomrule
\end{tabular}
\vspace{-1\baselineskip}
\end{table*}

\begin{table*}[!ht]
\small
\centering
\caption{Runtime ($\downarrow$) for each model to unlearn 1 sample measured in milliseconds (ms), and the standard deviation is given with the same unit in a smaller font. ``/'' means the model is unable to train on that dataset or unlearning takes too long.}
\label{tab:unl_1}
\vspace{1\baselineskip}
\newcommand\len{1.5cm}
\begin{tabular}{L{1.2cm}|C{\len}C{\len}C{\len}C{\len}C{\len}} \toprule

\rotatebox{0}{\parbox{\len}{Datasets}} &    
\rotatebox{0}{\parbox{\len}{DaRE}} &
\rotatebox{0}{\parbox{\len}{HedgeCut}} &
\rotatebox{0}{\parbox{\len}{Online\\Boosting}} &
\rotatebox{0}{\parbox{\len}{\dynfrs}}
\\ \midrule

Purchase
& 35.0{\tiny$\pm15$}
& 1245{\tiny$\pm343$}
& 83.4{\tiny$\pm9.0$}
& \textbf{0.40}{\tiny$\pm0.2$}
\\

Vaccine
& 16.0{\tiny$\pm15$}
& 33445{\tiny$\pm14372$}
& 222{\tiny$\pm53$}
& \textbf{1.40}{\tiny$\pm0.9$}
\\

Adult
& 10.6{\tiny$\pm5.0$}
& 3596{\tiny$\pm2396$}
& 249{\tiny$\pm62$}
& \textbf{1.10}{\tiny$\pm1.5$}
\\

Bank
& 33.2{\tiny$\pm17$}
& 2760{\tiny$\pm371$}
& 227{\tiny$\pm7.4$}
& \textbf{2.40}{\tiny$\pm3.7$}
\\

Heart
& 16.8{\tiny$\pm10$}
& 972{\tiny$\pm154$}
& 411{\tiny$\pm13$}
& \textbf{0.50}{\tiny$\pm0.2$}
\\

Diabetes
& 293{\tiny$\pm168$}
& 27654{\tiny$\pm10969$}
& 753{\tiny$\pm143$}
& \textbf{7.30}{\tiny$\pm5.4$}
\\

NoShow
& 330{\tiny$\pm176$}
& 1243{\tiny$\pm94$}
& 570{\tiny$\pm69$}
& \textbf{0.30}{\tiny$\pm0.0$}
\\

Synthetic
& 2265{\tiny$\pm3523$}
& /
& 5225{\tiny$\pm241$}
& \textbf{2.50}{\tiny$\pm3.7$}
\\

Higgs
& 174{\tiny$\pm135$}
& /
& 73832{\tiny$\pm4155$}
& \textbf{1.60}{\tiny$\pm1.8$}
\\

\bottomrule
\end{tabular}
\end{table*}

\begin{table*}[!ht]
\small
\centering
\caption{Runtime ($\downarrow$) for each model to unlearn 10 samples measured in milliseconds (ms), and the standard deviation is given with the same unit in a smaller font. ``/'' means the model is unable to train on that dataset or unlearning takes too long.}
\label{tab:unl_10}
\vspace{1\baselineskip}
\newcommand\len{1.5cm}
\begin{tabular}{L{1.2cm}|C{\len}C{\len}C{\len}C{\len}} \toprule

\rotatebox{0}{\parbox{\len}{Datasets}} &    
\rotatebox{0}{\parbox{\len}{DaRE}} &
\rotatebox{0}{\parbox{\len}{HedgeCut}} &
\rotatebox{0}{\parbox{\len}{Online\\Boosting}} &
\rotatebox{0}{\parbox{\len}{\dynfrs}}
\\ \midrule

Purchase
& 295{\tiny$\pm 54$}
& 10973{\tiny$\pm 643$}
& 183{\tiny$\pm 21$}
& \textbf{12.3}{\tiny$\pm 4.8$}
\\

Vaccine
& 285{\tiny$\pm 160$}
& 222333{\tiny$\pm 64756$}
& 418{\tiny$\pm 41$}
& \textbf{9.20}{\tiny$\pm 2.8$}
\\

Adult
& 148{\tiny$\pm 79$}
& 51831{\tiny$\pm 2795$}
& 389{\tiny$\pm 48$}
& \textbf{4.80}{\tiny$\pm 2.4$}
\\

Bank
& 320{\tiny$\pm 108$}
& 18091{\tiny$\pm 1524$}
& 423{\tiny$\pm 47$}
& \textbf{7.6}{\tiny$\pm 2.7$}
\\

Heart
& 162{\tiny$\pm 46$}
& 5524{\tiny$\pm 335$} 
& 625{\tiny$\pm36$}
& \textbf{6.80} {\tiny$\pm2.8$}
\\

Diabetes
& 2773{\tiny$\pm877$}
& 211640{\tiny$\pm79652$}
& 1096{\tiny$\pm172$}
& \textbf{85.5}{\tiny$\pm38$}
\\

NoShow
& 2217{\tiny$\pm723$}
& 17235{\tiny$\pm17905$}
& 712{\tiny$\pm74$}
& \textbf{18.2}{\tiny$\pm123$}
\\

Synthetic
& 92279{\tiny$\pm42634$}
& /
& 6015{\tiny$\pm301$}
& \textbf{77.3}{\tiny$\pm34$}
\\

Higgs
& 20119{\tiny$\pm31897$}
& /
& 104063{\tiny$\pm1258$} 
& \textbf{32.1}{\tiny$\pm8.6$}
\\

\bottomrule
\end{tabular}
\end{table*}

\begin{table*}[!ht]
\small
\centering
\caption{Runtime ($\downarrow$) for each model to unlearn 100 samples measured in milliseconds (ms), and the standard deviation is given with the same unit in a smaller font. ``/'' means the model is unable to train on that dataset or unlearning takes too long.}
\label{tab:unl_100}
\vspace{1\baselineskip}
\newcommand\len{1.75cm}
\begin{tabular}{L{1.2cm}|C{\len}C{\len}C{\len}C{\len}} \toprule

\rotatebox{0}{\parbox{\len}{Datasets}} &    
\rotatebox{0}{\parbox{\len}{DaRE}} &
\rotatebox{0}{\parbox{\len}{HedgeCut}} &
\rotatebox{0}{\parbox{\len}{Online\\Boosting}} &
\rotatebox{0}{\parbox{\len}{\dynfrs}}
\\ \midrule

Purchase
& 3591{\tiny$\pm186$}
& 83649{\tiny$\pm2047$}
& 275{\tiny$\pm12$} 
& \textbf{70.4}{\tiny$\pm11$}
\\

Vaccine
& 2385 {\tiny$\pm408$}        
& 1703355{\tiny$\pm157274$}
& 792{\tiny$\pm15.06$}
& \textbf{82.6}{\tiny$\pm11$}
\\

Adult
& 954{\tiny$\pm158$}
& 219392{\tiny$\pm34630$}
& 632{\tiny$\pm24$}
& \textbf{32.2}{\tiny$\pm4.0$}
\\

Bank
& 3546{\tiny$\pm302$}
& 195014{\tiny$\pm15854$}
& 740{\tiny$\pm20$}
& \textbf{78.8}{\tiny$\pm13$}
\\

Heart
& 1502{\tiny$\pm543$}
& 33806{\tiny$\pm5385$} 
& 986{\tiny$\pm75$}
& \textbf{59.3}{\tiny$\pm11$}
\\

Diabetes
& 23833{\tiny$\pm10330$}
& /
& 2071{\tiny$\pm115$}
& \textbf{578}{\tiny$\pm50$}
\\

NoShow
& 23856{\tiny$\pm3978$}
& 57021{\tiny$\pm6327$}
& 1120{\tiny$\pm92$}
& \textbf{117}{\tiny$\pm10$}
\\

Synthetic
& 1073356{\tiny$\pm420053$}
& /
& 7609{\tiny$\pm171$}
& \textbf{889}{\tiny$\pm287$}
\\

Higgs
& 165122{\tiny$\pm149092$}
& /
& 145386{\tiny$\pm5677$}
& \textbf{642}{\tiny$\pm317$}
\\

\bottomrule
\end{tabular}
\end{table*}

\begin{table*}[!ht]
\small
\centering
\caption{Left: runtime ($\downarrow$) for unlearning $0.1\%$ of the training set between models. Right: runtime ($\downarrow$) for unlearning $1\%$ of the training set between models. Each cell contains the elapsed time in seconds (s), and the standard deviation is given with the same unit in a smaller font. ``/'' means the model is unable to train on that dataset or unlearning takes too long.}
\label{tab:unl_.1p/1.p}
\vspace{1\baselineskip}
\newcommand\len{1.1cm}
\begin{tabular}{L{1.15cm}|C{\len}C{\len}C{\len}C{\len}|C{\len}C{\len}C{\len}C{\len}} \toprule

\rotatebox{0}{\parbox{\len}{Datasets}} &    
\rotatebox{0}{\parbox{\len}{DaRE}} &
\rotatebox{0}{\parbox{\len}{HedgeCut}} &
\rotatebox{0}{\parbox{\len}{Online\\Boosting}} &
\rotatebox{0}{\parbox{\len}{\dynfrs}} &
\rotatebox{0}{\parbox{\len}{DaRE}} &
\rotatebox{0}{\parbox{\len}{HedgeCut}} &
\rotatebox{0}{\parbox{\len}{Online\\Boosting}} &
\rotatebox{0}{\parbox{\len}{\dynfrs}}
\\ \midrule

Purchase
& 0.35{\tiny$\pm0.1$}
& 11.25{\tiny$\pm1.5$}
& 0.17{\tiny$\pm0.0$}
& \textbf{0.01}{\tiny$\pm0.0$}
& 3.39{\tiny$\pm0.7$}
& 76.0{\tiny$\pm3.0$}
& 0.28{\tiny$\pm0.0$}
& \textbf{0.07}{\tiny$\pm0.0$}
\\

Vaccine
& 0.47{\tiny$\pm0.1$}
& 404.73{\tiny$\pm69$}
& 0.61{\tiny$\pm0.1$}
& \textbf{0.02}{\tiny$\pm0.0$}
& 5.01{\tiny$\pm1.0$}
& 4054{\tiny$\pm427$}
& 0.98{\tiny$\pm0.0$}
& \textbf{0.13}{\tiny$\pm0.0$}
\\

Adult
& 0.44{\tiny$\pm0.3$}
& 88.1{\tiny$\pm27$}
& 0.49{\tiny$\pm0.1$}
& \textbf{0.01}{\tiny$\pm0.0$}
& 3.39{\tiny$\pm0.6$}
& 516{\tiny$\pm23$}
& 0.80{\tiny$\pm0.0$} 
& \textbf{0.09}{\tiny$\pm0.0$}
\\

Bank
& 1.15{\tiny$\pm0.3$} 
& 47.3{\tiny$\pm6.0$} 
& 0.61{\tiny$\pm0.1$}
& \textbf{0.02}{\tiny$\pm0.0$}
& 14.7{\tiny$\pm2.3$} 
& 418{\tiny$\pm25$}
& 0.96{\tiny$\pm0.0$}
& \textbf{0.16}{\tiny$\pm0.0$}
\\

Heart
& 0.70{\tiny$\pm0.2$}
& 20.0{\tiny$\pm1.7$}
& 0.85{\tiny$\pm0.0$}
& \textbf{0.03}{\tiny$\pm0.0$}
& 8.43{\tiny$\pm1.2$}
& 145{\tiny$\pm10$}
& 1.23{\tiny$\pm0.0$}
& \textbf{0.20}{\tiny$\pm0.0$}
\\

Diabetes
& 23.8{\tiny$\pm2.7$} 
& 694{\tiny$\pm61$}
& 2.12{\tiny$\pm0.1$}
& \textbf{0.57}{\tiny$\pm0.1$}
& 258{\tiny$\pm20$}
& /
& 3.51{\tiny$\pm0.1$}
& \textbf{2.50}{\tiny$\pm0.1$}
\\

NoShow
& 18.8{\tiny$\pm2.7$}
& 57.0{\tiny$\pm6.3$}
& 1.10{\tiny$\pm0.1$}
& \textbf{0.10}{\tiny$\pm0.0$}
& 268{\tiny$\pm7.5$}
& /
& 1.90{\tiny$\pm0.1$}
& \textbf{0.56}{\tiny$\pm0.0$}
\\

Synthetic
& 10790{\tiny$\pm5348$}
& /
& 13.1{\tiny$\pm0.6$} 
& \textbf{5.68}{\tiny$\pm0.2$}
& /
& /
& 44.2{\tiny$\pm1.4$}
& \textbf{27.4}{\tiny$\pm0.9$}
\\    

Higgs
& /
& /
& 188{\tiny$\pm7.1$}
& \textbf{39.2}{\tiny$\pm0.7$}
& /
& /
& 456{\tiny$\pm4.7$}
& \textbf{201}{\tiny$\pm9.6$}
\\

\bottomrule
\end{tabular}
\end{table*}

\subsection{Space Complexity and Memory Consumption}

The space complexity of $\dynfrs$ is $\mathcal O(qTn + Tvps)$ where $T$ is the number of trees in the forest, $n$ is the number of samples in the training sets, $q$ is the factor used in OCC, $v$ is the average number of nodes in each tree, $p$ is the number of attributes considered by each node, and $s$ is the number of candidate splits. Since we store training samples on each leaf, and each tree occupies $qn$ samples on average, then the leaf statistics sums up to $\mathcal O(qTn)$. As mentioned in section 4.3, we store an extra $\mathcal O(ps)$ split statistics on each node so that these split statistics contribute up to $\mathcal O(Tvps)$ space.

We compare the maximum resident space of $\dynfrs$ and DaRE for building on the training set. We found that DaRE, which has a space complexity of $\mathcal O(Tn + Tvps)$, has larger memory consumption than $\dynfrs$ in all datasets. We use \texttt{/usr/bin/time} in Linux to evaluate the maximum resident set size.

\begin{table}[H]
\small
\centering
\vspace{-1\baselineskip}
\caption{The maximum resident set size ($\downarrow$) of each model measured in megabytes with standard deviation shown in a smaller font.}
\vspace{1\baselineskip}
\begin{tabular}{l|lllllllll}

\toprule
Datasets &
DaRE &
$\dynfrs$ \\ \midrule

Purchase &
398.4{\tiny$\pm1.4$} &
\textbf{83.2}{\tiny$\pm1.6$} \\

Vaccine &
782.2{\tiny$\pm1.6$} &
\textbf{492.8}{\tiny$\pm1.6$} \\

Adult &
460.6{\tiny$\pm2.2$} &
\textbf{232.0}{\tiny$\pm0$} \\

Bank &
801.0{\tiny$\pm1.8$} &
\textbf{300.0}{\tiny$\pm0$} \\

Heart &
254.4{\tiny$\pm1.7$} &
\textbf{252.2}{\tiny$\pm1.6$} \\

Diabetes &
7148{\tiny$\pm39$} &
\textbf{2512}{\tiny$\pm7.6$} \\

NoShow &
3792{\tiny$\pm19$} &
\textbf{761.6}{\tiny$\pm2.0$} \\

Synthetic &
8526{\tiny$\pm65$} &
\textbf{5827}{\tiny$\pm7.8$} \\

Higgs &
60528{\tiny$\pm789$} &
\textbf{58263}{\tiny$\pm52$} \\

\bottomrule
\end{tabular}
\end{table}

\subsection{Multi-class Classification}

$\dynfrs$ is capable of handling multi-class classification tasks since $\occ(q)$ and $\lzy$ do not affect the functionality of the forest. We tested $\dynfrs$'s prediction performance and unlearning boost on 3 datasets --- Optical \citep{DT-NP}, Pen \citep{Pen}, and Letter \citep{Letter}.
Unfortunately, existing random forest unlearning methods (DaRE and HedgeCut) have no implementation for multi-class classification, so we only include the Random Forest Classifier implementation (we call it Vanilla in the following) in scikit-learn as our baseline.

\begin{table}[H]
\small
\centering
\vspace{-1\baselineskip}
\caption{Datasets specifications. (\#\,train: number of training samples; \#\,test: number of testing samples; \#\,attr: number of attributes; \#\,class: number of label classes.)}
\vspace{1\baselineskip}
\begin{tabular}{l|llll} \toprule
Datasets &
\#\,train &
\#\,test &
\#\,attr &
\#\,class \\ \midrule

Optical &
3823 &
1797 &
64 &
10 \\

Pen &
7494  &
3498 &
16 &
10 \\

Letter &
15000  &
5000 &
16 &
26 \\

\bottomrule
\end{tabular}
\vspace{-1\baselineskip}
\end{table}

\begin{table}[H]
\small
\centering
\vspace{-1\baselineskip}
\caption{Each model's predictive performance, training time, and unlearning boost with standard deviation shown in a smaller font.}
\vspace{1\baselineskip}
\begin{tabular}{l|ll|ll|ll} \toprule
 &
\multicolumn{2}{c|}{Accuracy ($\uparrow$)} &
\multicolumn{2}{c|}{Train Time ($\downarrow$,ms)} &
\multicolumn{2}{c}{Unlearn Boost ($\uparrow$)} \\ \midrule

Datasets &
Vanilla &
$\dynfrs$ &
Vanilla &
$\dynfrs$ &
Vanilla &
$\dynfrs$ \\ \midrule

Optical &
.9694{\tiny$\pm.002$} &
\textbf{.9707}{\tiny$\pm.002$} &
585.4{\tiny$\pm1.9$} &
\textbf{445.4}{\tiny$\pm2.7$} &
1{\tiny$\pm0$} &
\textbf{292.5}{\tiny$\pm12.1$} \\

Pen &
.9649{\tiny$\pm.002$} &
\textbf{.9696}{\tiny$\pm.002$} &
996.2{\tiny$\pm1.7$} &
\textbf{813.2}{\tiny$\pm8.0$} &
1{\tiny$\pm0$} &
\textbf{603.0}{\tiny$\pm44.8$} \\

Letter &
.9603{\tiny$\pm.002$} &
\textbf{.9624}{\tiny$\pm.001$} &
1666{\tiny$\pm7.6$} &
\textbf{1530}{\tiny$\pm20$} &
1{\tiny$\pm0$} &
\textbf{1006}{\tiny$\pm54.2$} \\

\bottomrule
\end{tabular}
\vspace{-1\baselineskip}
\end{table}

Results show that $\dynfrs$ outperforms the vanilla Random Forest in terms of predictive performance and training time for all datasets. Additionally, $\dynfrs$ still shows splendid unlearning efficiency on these datasets. In all datasets, $\dynfrs$ outperforms Vanilla by 2-3 order of magnitudes in terms of unlearning efficiency. Also notice that $\dynfrs$ is poorly optimized for multi-class classification tasks.

In the multi-class classification setting, we suggest picking $q = 0.5$ for $\occ(q)$, because the rise in class number leads to the drop of sample size to each class (roughly $n / c$, where $n$ is the number of training samples and $c$ is the number of classes) and rising $q$ enable each tree is accessible to more samples belonging to a specific class and lead to better predictive performance eventually. For all datasets, we set the hyperparameters of $\dynfrs$ to $T = 100$, $d_{\max} = 20$, and $s = 1$.

\subsection{Regression}

Regression tasks are not implemented and evaluated by any of the existing tree-based unlearning methods. Since there is no such baseline, we compare $\dynfrs$'s predictive performance (via mean squared error) and unlearning efficiency (via unlearning boost) against the naive retraining method. We used a large and popular dataset Bike \citep{Bike} for the regression task.

\begin{table}[H]
\small
\centering
\vspace{-1\baselineskip}
\caption{Datasets specifications. (\#\,train: number of training samples; \#\,test: number of testing samples; \#\,attr: number of attributes.)}
\vspace{1\baselineskip}
\begin{tabular}{l|llll} \toprule
Datasets &
\#\,train &
\#\,test &
\#\,attr \\ \midrule

Bike &
13903 &
3476 &
16 \\

\bottomrule
\end{tabular}
\vspace{-1\baselineskip}
\end{table}

\begin{table}[H]
\small
\centering
\vspace{-1\baselineskip}
\caption{Each model's predictive performance, training time, and unlearning boost with standard deviation are shown in a smaller font.}
\vspace{1\baselineskip}
\begin{tabular}{l|ll|ll|ll} \toprule
 &
\multicolumn{2}{c|}{MSE ($\downarrow$)} &
\multicolumn{2}{c|}{Train Time ($\downarrow$,ms)} &
\multicolumn{2}{c}{Unlearn Boost ($\uparrow$)} \\ \midrule

Datasets &
Vanilla &
$\dynfrs$ &
Vanilla &
$\dynfrs$ &
Vanilla &
$\dynfrs$ \\ \midrule

Bike &
3.904{\tiny$\pm.071$} &
\textbf{3.011}{\tiny$\pm.167$} &
6796{\tiny$\pm6.5$} &
\textbf{3412}{\tiny$\pm54$} &
1{\tiny$\pm0$} &
\textbf{2368}{\tiny$\pm118$} \\

\bottomrule
\end{tabular}
\vspace{-1\baselineskip}
\end{table}

Results show that $\dynfrs$ outperforms the vanilla Random Forest in terms of predictive performance and training time. Note that the implementation of $\dynfrs$ regressor is rough and poorly optimized due to short time slots. However, $\dynfrs$ still shows outstanding unlearning efficiency on regression task, as $\dynfrs$ achieves an averaged $2368$ unlearning boost in dataset Bike.

In the regression setting, we suggest picking $q = 0.5$ for lower mean squared error. In the experiment, we set the hyperparameters of $\dynfrs$ to $T = 100$, $d_{\max} = 15$, and $s = 5$.

\subsection{Online Mixed Data Stream} \label{apdx:stream}

\begin{table}[H]
\small
\centering
\vspace{-1\baselineskip}
\caption{$\dynfrs$'s latency ($\downarrow$) of sample addition/removal and querying in 4 scenarios measured in microseconds ($\mu$s). \#\,add/del/qry: the number of sample addition/removal/querying requests; add/del/qry lat.: the latency of sample addition/removal/querying. Each cell contains the averaged latency with its minimum and maximum values listed in a smaller font.}
\label{tab:stream}
\vspace{1\baselineskip}
\begin{tabular}{c|C{1cm}C{1cm}C{0.75cm}|ccc}

\toprule
No. &
\#\,add &
\#\,del &
\#\,qry &
add lat. ($\downarrow$, $\mu$s) &
del lat. ($\downarrow$, $\mu$s) & 
qry lat. ($\downarrow$, $\mu$s) \\
\midrule

1 &
$5\cdot10^5$ &
$5\cdot10^5$ &
$10^6$ &
406.2\,{\tiny$[150,450894]$} &
437.6\,{\tiny$[134,394801]$} &
3680\,{\tiny$[209,1885030]$} \\

2 &
$5\cdot10^5$ &
$5\cdot10^5$ &
$10^6$ &
122.7\,{\tiny$[30,168984]$} &
120.2\,{\tiny$[25,146300]$} &
1218\,{\tiny$[23,1026964]$} \\

3 &
$5\cdot10^4$ &
$5\cdot10^4$ &
$10^6$ &
140.0\,{\tiny$[30,81661]$} &
139.2\,{\tiny$[32,101429]$} &
299.5\,{\tiny$[21,861151]$} \\

4 &
$5\cdot10^3$ &
$5\cdot10^3$ &
$10^6$ &
145.5\,{\tiny$[44,55954]$} &
140.5\,{\tiny$[38,29810]$} &
72.3\,{\tiny$[19,212915]$} \\
\bottomrule

\end{tabular}
\end{table}

To simulate a large-scale database, we use the Higgs dataset, the largest in our study.
We train $\dynfrs$ on $88,000,000$ samples and feed it with mixed data streams with different proportions of modification requests.
Scenario 1 is the vanilla single-thread setting, while scenarios 2, 3, and 4 employ 25 threads using OpenMP.
$\dynfrs$ achieves an averaged latency of less than 0.15 ms for modification requests (Table \ref{tab:stream} column \#\,add and \#\,del) and significantly outperforms DaRE, which requires 180 ms to unlearn a single instance on average.
Query latency drops from 1.2 ms to 0.07 ms as the number of modification requests declines, as fewer lazy tags are introduced to trees.

These results are striking: while it takes over an hour to train a vanilla Random Forest on Higgs, $\dynfrs$ maintains exceptionally low latency that is measured in $\mu$s, even in the single-threaded setting.
This makes $\dynfrs$ highly suited for real-world scenarios, especially when querying constitutes a large proportion of requests (Table \ref{tab:stream} Scenario 4).

\subsection{Effects on Number of Candidates}

We assessed the predictive performance of $\dynfrs$ with differnet candidate number $s$. We tested $\dynfrs(q = 0.1)$ with $s \in \{1, 5, 10, 20, 30, 50\}$ and report the predictive performance on all the binary classification datasets (same settings as in Section \ref{sec:acc}). The results are summarized in the table below:

\begin{table}[H]
\small
\vspace{-1\baselineskip}
\caption{The predictive performance ($\uparrow$) of $\dynfrs(q = 0.1)$ with $s \in \{ 1, 5, 10, 20, 30, 50 \}$.}
\vspace{1\baselineskip}
\centering
\begin{tabular}{l|cccccc}

\toprule
Datasets &
$s = 1$ &
$s = 5$ &
$s = 10$ &
$s = 20$ &
$s = 30$ &
$s = 50$ \\ \midrule

Purchase &
.9242{\tiny$\pm.001$} &
.9313{\tiny$\pm.000$} &
.9323{\tiny$\pm.000$} &
.9329{\tiny$\pm.001$} &
.9328{\tiny$\pm.001$} &
.9330{\tiny$\pm.001$} \\

Vaccine & 
.7911{\tiny$\pm.002$} &
.7910{\tiny$\pm.000$} &
.7908{\tiny$\pm.001$} &
.7910{\tiny$\pm.002$} &
.7911{\tiny$\pm.002$} &
.7912{\tiny$\pm.001$} \\

Adult &
.8558{\tiny$\pm.001$} &
.8610{\tiny$\pm.001$} &
.8624{\tiny$\pm.001$} &
.8635{\tiny$\pm.001$} &
.8635{\tiny$\pm.000$} &
.8640{\tiny$\pm.001$} \\

Bank &
.9323{\tiny$\pm.000$} &
.9399{\tiny$\pm.000$} &
.9409{\tiny$\pm.000$} &
.9414{\tiny$\pm.001$} &
.9417{\tiny$\pm.000$} &
.9417{\tiny$\pm.001$} \\

Heart &
.7365{\tiny$\pm.001$} &
.7359{\tiny$\pm.001$} &
.7357{\tiny$\pm.001$} &
.7359{\tiny$\pm.002$} &
.7357{\tiny$\pm.000$} &
.7351{\tiny$\pm.001$} \\

Diabetes &
.6429{\tiny$\pm.001$} &
.6453{\tiny$\pm.001$} &
.6451{\tiny$\pm.001$} &
.6446{\tiny$\pm.001$} &
.6442{\tiny$\pm.001$} &
.6443{\tiny$\pm.001$} \\

NoShow &
.7278{\tiny$\pm.001$} &
.7332{\tiny$\pm.000$} &
.7328{\tiny$\pm.000$} &
.7332{\tiny$\pm.000$} &
.7323{\tiny$\pm.001$} &
.7328{\tiny$\pm.000$} \\

Synthetic &
.9352{\tiny$\pm.000$} &
.9415{\tiny$\pm.000$} &
.9421{\tiny$\pm.000$} &
.9422{\tiny$\pm.000$} &
.9424{\tiny$\pm.000$} &
.9423{\tiny$\pm.000$} \\

Higgs &
.7277{\tiny$\pm.000$} &
.7409{\tiny$\pm.000$} &
.7423{\tiny$\pm.000$} &
.7431{\tiny$\pm.000$} &
.7431{\tiny$\pm.000$} &
.7431{\tiny$\pm.000$} \\

\bottomrule
\end{tabular}
\end{table}

From the result, we find that the performance peaks around $s = 20$ in most datasets, and the result of $s = 30$ and $s = 50$ has no significant difference. However, in datasets Heart, Diabetes, and NoShow, a smaller $s$ has an even higher predictive performance, suggesting that considering more candidates might not always be the best choice. These results indicate that the optimal $s$ is dataset-specific and can be tuned for improved predictive performance in $\dynfrs$.

\subsection{Hyperparameters} \label{apdx:hyperpara}

All hyperparameters of $\dynfrs$ are listed in Table \ref{tab:hyperpara}. Specially, we set the minimum split size of each node to be $10$ for all datasets.

\begin{table}[!ht]
\small
\centering
\vspace{-1\baselineskip}
\caption{Hyperparameters used by $\dynfrs$ with both $q = 0.1$ and $q = 0.2$ setting.}
\vspace{1\baselineskip}
\label{tab:hyperpara}
\begin{tabular}{l|ccc}
\toprule
Datasets & $T$ & $d_{\max}$ & $s$ \\ \midrule
Purchase    & 250 & 10 & 30 \\
Vaccine     & 250 & 20 & 5  \\
Adult       & 100 & 20 & 30 \\
Bank        & 250 & 25 & 30 \\
Heart       & 150 & 15 & 5  \\
Diabetes    & 250 & 30 & 5  \\
NoShow      & 250 & 20 & 5  \\
Synthetic   & 150 & 40 & 30 \\
Higgs       & 100 & 30 & 20 \\
\bottomrule
\end{tabular}
\end{table}

\end{document}